\documentclass[a4paper]{article}
\usepackage[utf8x]{inputenc}

\usepackage[a4paper,top=3cm,bottom=2cm,left=3cm,right=3cm,marginparwidth=1.75cm]{geometry}
\usepackage[numbers,sort]{natbib}
\usepackage{amsmath,amsfonts,amsthm}
\usepackage{xspace}
\usepackage{algorithm}
\usepackage{algorithmic}
\usepackage{graphicx}
\usepackage[colorlinks=true, allcolors=blue]{hyperref}
\usepackage{booktabs}
\usepackage{multirow}
\usepackage{subcaption}
\usepackage{pgfplots}
\usepackage{filecontents}
\usepackage{tikz}
\usetikzlibrary{calc}
\usetikzlibrary{mindmap}
\usetikzlibrary{arrows}
\usetikzlibrary{shadows}
\usetikzlibrary{plotmarks}
\usetikzlibrary{backgrounds}
\usetikzlibrary{shapes}
\usetikzlibrary{shapes.symbols}

\newcommand{\onemax}{\textsc{OneMax}\xspace}

\newcommand{\lotz}{\textsc{LOTZ}\xspace}
\newcommand{\leadingones}{\textsc{LeadingOnes}\xspace}
\newcommand{\omm}{\textsc{OneMinMax}\xspace}

\newcommand{\harm}{\mathrm{H}_n}

\newcommand{\ones}[1]{\left|#1\right|_1}
\newcommand{\zeros}[1]{\left|#1\right|_0}

\newcommand{\hvc}{\textsc{HVC}\xspace}
\newcommand{\HVC}[2]{\hvc\mathord{\left(#1,#2\right)}}
\newcommand{\cdc}{\textsc{CDC}\xspace}
\newcommand{\CDC}[2]{\cdc\mathord{\left(#1,#2\right)}}
\newcommand{\ih}{\mathrm{I_H}}
\newcommand{\Ih}[1]{\ih\mathord{\left(#1\right)}}
\newcommand{\Score}{\textsc{C}\xspace}
\newcommand{\score}[2]{\ensuremath{\Score\left(#1, #2\right)}}

\newcommand{\mrow}[3]{\multirow{#1}{#2}{#3}}
\newcommand{\expnumber}[2]{{#1}\mathrm{E+}{#2}}


\newcommand{\property}{diversity-favouring\xspace}
\newcommand{\good}{good\xspace}
\newcommand{\bad}{bad\xspace}
\newcommand{\pgood}{p_{\mathrm{\good}}}
\newcommand{\front}{F^*}
\newcommand{\paretoset}{X^*}
\newcommand{\ld}{\mathrm{L}}
\newcommand{\LD}[1]{\ld\mathord{\left(#1\right)}}
\newcommand{\lo}{\mathrm{LO}}
\newcommand{\LO}[1]{\lo\mathord{\left(#1\right)}}
\newcommand{\tz}{\mathrm{TZ}}
\newcommand{\TZ}[1]{\tz\mathord{\left(#1\right)}}
\newcommand{\f}[2]{f_#1(#2)}


\newcommand{\nsgaii}{\textsc{NSGA-II}\xspace}
\newcommand{\speatwo}{\textsc{SPEA2}\xspace}
\newcommand{\ibea}{\textsc{IBEA}\xspace}
\newcommand{\smsemoa}{\textsc{SMS-EMOA}\xspace}


\newcommand{\prob}{\mathrm{Prob}}
\newcommand{\Prob}[1]{\prob\mathord{\left(#1\right)}}

\newcommand{\ie}{i.\,e.,\xspace}
\newcommand{\eg}{e.\,g.,\xspace}




\newcommand{\ignore}[1]{}

\newtheorem{theorem}{Theorem}[section]
\newtheorem{lemma}[theorem]{Lemma}

\newtheorem{definition}[theorem]{Definition}

\title{Design and Analysis of Diversity-Based Parent~Selection Schemes for Speeding Up Evolutionary~Multi-objective Optimisation}

\author{Edgar Covantes Osuna\footnote{Department of Computer Science, The University of Sheffield, Sheffield, United Kingdom.} \and Wanru Gao\footnote{Optimisation and Logistics, School of Computer Science, The University of Adelaide, Adelaide, Australia.} \and Frank Neumann\footnotemark[2] \and Dirk Sudholt\footnotemark[1]}

\begin{document}
\maketitle
\begin{abstract}
Parent selection in evolutionary algorithms for multi-objective optimisation is usually performed by dominance mechanisms or indicator functions that prefer non-dominated points.
We propose to refine the parent selection on evolutionary multi-objective optimisation with diversity-based metrics. The aim is to focus on individuals with a high diversity contribution located in poorly explored areas of the search space, so the chances of creating new non-dominated individuals are better than in highly populated areas. We show by means of rigorous runtime analysis that the use of diversity-based parent selection mechanisms in the Simple Evolutionary Multi-objective Optimiser (SEMO) and Global SEMO for the well known bi-objective functions \omm and \lotz can significantly improve their performance.
Our theoretical results are accompanied by experimental studies that show a correspondence between theory and empirical results and motivate further theoretical investigations in terms of stagnation. We show that stagnation might occur when favouring individuals with a high diversity contribution in the parent selection step and provide a discussion on which scheme to use for more complex problems based on our theoretical and experimental results.
\end{abstract}

\section{Introduction}
\label{sec:intro}

Evolutionary algorithms have been used for a wide range of complex optimisation and design problems in various areas such as engineering, logistics, and art. Selection plays a crucial role in the use of evolutionary algorithms as it sets the direction of the evolutionary process. An evolutionary algorithm consists of two parts where selection of individuals is carried out. Parent selection decides on which individuals of the current population produce offspring, whereas survival selection selects the population for the next generation from the current set of parents and offspring after the offspring population has been produced.

The area of evolutionary multi-objective optimisation (EMO) designs pop\-u\-la\-tion-based evolutionary algorithms (EAs) where the population is used to approximate the so-called Pareto front. Given that EAs use a population which is a set of solutions to a given problem, EAs are suited in a natural way for computing trade-offs with respect to two (or more) conflicting objective functions.

Well established multi-objective evolutionary algorithms (MOEAs) such as \nsgaii \cite{Deb2002}, \speatwo \cite{Bleuler2001}, \ibea \cite{Zitzler2004} have two basic principles driven by selection. First of all, the goal is to push the current population close to the ``true'' Pareto front. The second goal is to ``spread'' the population along the front such that it is well covered. The first goal is usually achieved by dominance mechanisms between the search points or indicator functions that prefer non-dominated points. The second goal involves the use of diversity mechanisms. Alternatively, indicators such as the hypervolume indicator play a crucial role to obtain a good spread of the different solutions of the population along the Pareto front.

In the context of EMO, parent selection is often uniform whereas survival selection is based on dominance and the contribution of an individual to the diversity of the population. In this paper, we explore the use of different parent selection mechanisms in EMO. The goal is to speed up the optimisation process of an EMO algorithm by selecting individuals that have a high chance of producing beneficial offspring. To our knowledge the use of different parent selection schemes has not been widely studied and there are only a few algorithms placing emphasis on selecting good parents for reproduction.
\nsgaii~\cite{Deb2002} and \speatwo~\cite{Bleuler2001} focus on survival selection. However, both use tournament selection based on Pareto ranking and their incorporated diversity measure to select the parents. We establish a similar ranking of the individuals in the parent population and examine a wide range of parent selection distributions and their impact on the performance of our studied algorithms.
In~\cite{Phan2011} a MOEA with parent selection using a so-called prospect indicator is used to improve \smsemoa. The prospect indicator evaluates the potential (or prospect) of an individual to reproduce offspring that dominates itself. Their experimental results show improvement over classical MOEAs.

The parent selection mechanisms studied in this paper use the diversity contribution of an individual in the parent population to select promising individuals for reproduction. 
The main assumption is that individuals with a high diversity score are located in poorly explored or less dense areas of the search space, so the chances of creating new non-dominated individuals are better than in areas where there are several individuals. In this sense we have designed parent selection schemes for MOEAs that let the MOEA focus on individuals where the neighbourhood is not fully covered and in consequence, force the reproduction in those areas and to the spread of the population along the search space. 

In our investigations, we focus on parent selection mechanisms that favour individuals having a high hypervolume contribution (HVC) or high crowding distance contribution (CDC). HVC plays a crucial role in the survival selection of hypervolume-based EMO algorithms whereas the crowding distance measure is used in popular algorithms such as \nsgaii.
We propose several different parent selection mechanisms that take one of these two measures and then select individuals according to their diversity contribution. The different selection mechanisms differ in their selection strength, from mild preferences for more appealing parents to more aggressive schemes that yield a quite drastic change of behaviour.
Specifically, we propose schemes based on the ranks of the individuals according to their diversity contribution, selecting according to an exponential, power law, or harmonic distribution. Furthermore, we consider tournament selection, selecting the individuals with the highest diversity contribution (HDC) as well as a ranking scheme called Non-Minimum Uniform at Random (NMUAR) which ignores the individuals with the minimum diversity contribution.

We show by means of rigorous runtime analysis that the use of diversity-based parent selection mechanisms can significantly improve the performance of MOEAs. The area of runtime analysis has contributed significantly to the theoretical understanding of EMO algorithms~\cite{Giel2010,Friedrich2011,Horoba2010,Qian2016} and allows to study different components of EMO methods from a rigorous perspective.
In order to gain insights into the potential benefits of the diversity-based parent selection mechanisms, we study the functions \omm and \lotz (Leading Ones, Trailing Zeroes) introduced in~\cite{Giel2010} and \cite{Laumanns2004}, respectively. \omm generalizes the well-known \onemax function and \lotz generalizes the well-known \leadingones problem to the multi-objective case. Both functions have been examined in a wide range of theoretical studies for variants of the SEMO algorithm. Other studies in the area of runtime analysis of MOEAs consider hypervolume-based algorithms~\cite{Nguyen2015,Doerr2016}, namely a variant of \ibea, and MOEAs incorporating other diversity mechanisms for survival selection~\cite{Horoba2010}.

We show that the use of various diversity-based parent selection mechanisms speeds up SEMO by factors of order~$n$ or $n/\log n$ for \omm and \lotz with regards to the expected time for finding the whole Pareto front. For \lotz the use of rank-based parent selection can reduce the expected time to compute the whole Pareto front from $\Theta(n^3)$ to $O(n^2)$ (see \cite{Motwani1995} for the asymptotic notation). Studying \omm, we show a similar effect, \ie that the expected time reduces from $\Theta(n^2 \log n)$  to $O(n \log n)$ for our best performing rank-based parent selection methods. The results for \omm also hold for Global SEMO (GSEMO) which uses standard bit mutations where every bit in the mutation step is flipped with probability $1/n$.

This article extends its conference version~\cite{Covantes2017} in various ways. In~\cite{Covantes2017} for \lotz only SEMO was analysed as the analysis of GSEMO was too challenging. Here we address this challenge by providing investigations for a variant of GSEMO on \lotz. This modified GSEMO uses a feature we call $\ld$-dominant attribute, which ensures that individuals closest to the front are selected in the parent selection step. Furthermore, we provide additional experimental results. This includes a detailed experimental investigation on the stagnation probabilities for parent selection methods that are in some cases not able to obtain the whole Pareto front. These experimental results motivate new additional theoretical analyses of the stagnation probability for very greedy schemes for GSEMO with the $\ld$-dominant attribute on \lotz as well as SEMO and GSEMO on \omm provided in Section~\ref{sec:discgreed}.
We point out situations for \lotz where using parent selection to focus on the highest diversity contribution can lead to stagnation if global mutations are being used. However, the same parent selection mechanism is effective for SEMO where only local mutations are being used. Investigating \omm and NMUAR in the parent selection step, we show that the choice of the reference point for hypervolume-based selection can make the difference between stagnation and an expected polynomial time. Namely, we show that choosing the reference point as $(-n-1,-1)$ for NMUAR has a positive probability of reaching stagnation whereas any symmetric reference point $(-r,-r)$, $r \geq 1$, leads to an expected time of $O(n^2)$.
Finally, we discuss our findings and conclude that the use of a power-law distribution within the parent selection provides the best trade-off between speed and the risk of stagnation.

The outline of the paper is as follows. In Section~\ref{sec:pre}, we introduce the algorithms and problems that are subject to our investigations. Section~\ref{sec:divparsec} establishes the algorithmic framework used in the theoretical and experimental analysis. Section~\ref{sec:progress} establishes some general properties that enable speed-ups through diversity-based parent selection. Our rigorous runtime results for \omm and \lotz are presented in Section~\ref{sec:ommpro} and~\ref{sec:lotzpro}, respectively. An experimental study complementing the theoretical results is presented in Section~\ref{sec:exp} and additional experimentally motivated theoretical studies on the effectiveness of greediness in parent selection are presented in Section~\ref{sec:discgreed}.
Finally, we finish with some discussion and concluding remarks.

\section{Preliminaries}
\label{sec:pre}

In our investigations we consider problems ${f=(f_1, \ldots, f_m)\colon \{0,1\}^n \rightarrow \mathbb{R}^m}$. Throughout this paper, we assume without loss of generality that each function $f_i$, $1 \leq i \leq m$, should be maximised. As there is no single point that maximises all functions simultaneously, the goal is to find a set of so-called Pareto-optimal solutions.

\begin{definition}[Pareto optimality]
\label{def:paropt}
Let~$f:X \to F$, where~$X\subseteq\{0,1\}^n$ is called decision space and~$F\subseteq \mathbb{R}^m$ objective space. The elements of~$X$ are called decision vectors and the elements of~$F$ objective vectors. A decision vector~$x \in X$ is Pareto optimal if there is no other~$y \in X$ that dominates~$x$. $y$ dominates~$x$, denoted as~$y \succ x$, if~$f_i(y) \geq f_i(x)$ for all~$i=1,\ldots,m$ and~$f_i(y)>f_i(x)$ for at least one index~$i$. A decision vector~$y$ weakly dominates~$x$, denoted by $y\succeq x$, if $f_i(y) \geq f_i(x)$, for all $i$. The set of all Pareto-optimal decision vectors~$\paretoset$ is called Pareto set. $\front=f(\paretoset)$ is the set of all Pareto-optimal objective vectors and denoted as Pareto front.
\end{definition}

We consider \omm and \lotz (see Definition~\ref{def:omm} and \ref{def:lotz}) which are benchmark functions that facilitate the theoretical analysis. These functions have previously been used in the theoretical analysis of evolutionary algorithms and our choice therefore allows for comparisons with previous approaches such as the ones investigated in~\cite{Giel2003,Laumanns2004,Giel2010}. 

\begin{definition}[\omm]
\label{def:omm}
A pseudo-Boolean function $\{0,1\}^n\to \mathbb{N}^2$ with the objective functions
\begin{displaymath}
\omm(x_1,\ldots,x_n):= \left(\sum_{i=1}^{n}x_i,n-\sum_{i=1}^{n}x_i\right),
\end{displaymath}
where the aim is to maximise the number of ones and zeroes at the same time (see Figure~\ref{fig:omm}).
\end{definition}

\begin{definition}[Leading Ones, Trailing Zeroes, \lotz]
\label{def:lotz}
A pseudo-Boolean function $\{0,1\}^n\to \mathbb{N}^2$ defined as
\begin{displaymath}
\lotz(x_1,\ldots,x_n):=\left(\sum_{i=1}^{n}\prod_{j=1}^{i}x_j,\sum_{i=1}^{n}\prod_{j=i}^{n}(1-x_j) \right),
\end{displaymath}
where the goal is to simultaneously maximise the number of leading ones and trailing zeroes (see Figure~\ref{fig:lotz}).
\end{definition}

\omm has the property that every single solution represents a point in the Pareto front and that no search point is strictly dominated by another one. The goal is to cover the whole Pareto front, \ie to compute a set of individuals that contains for each $i$, $0 \le i \le n$, an individual with exactly $i$ ones. In the case of \lotz, all non-Pareto optimal decision vectors only have Hamming neighbours that are better or worse, but never incomparable to it. This fact facilitates the analysis of the population-based algorithms, which certainly cannot be expected from other multi-objective optimisation problems. Note that the Pareto front for \lotz is given by the set of $n+1$ search points $\{1^i 0^{n-i} \mid 0 \le i \le n\}$.

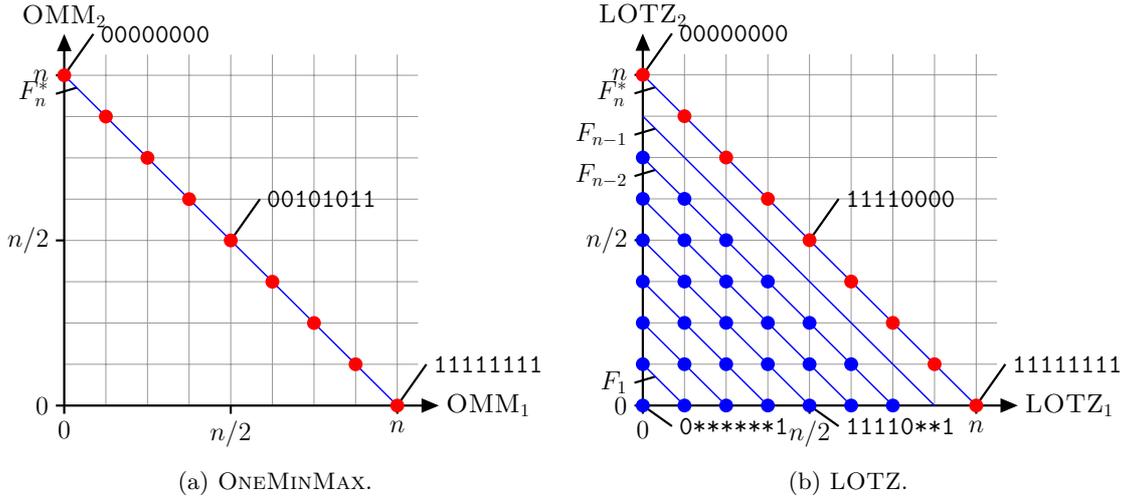
\begin{figure*}[t]
    \centering
    \begin{subfigure}[t]{0.49\textwidth}
        \centering
     \resizebox{\linewidth}{!}{
		\begin{tikzpicture}[domain=0:8,xscale=0.8,yscale=0.8,scale=1, every shadow/.style={shadow xshift=0.0mm, shadow yshift=0.4mm}]
          \tikzstyle{helpline}=[black,very thick];
          \tikzstyle{function}=[blue,thick];
          \tikzstyle{individual}=[blue,very thick];
          \tikzstyle{pareto}=[red,very thick];
		  \draw[black!40,line width=0.2pt,xstep=1,ystep=1] (0,0) grid (8.5,8.5);          
          \draw[helpline, -triangle 45] (0,0) -- (0,9) node[above] {\Large $\textsc{OMM}_{2}$}; 
          \draw[helpline, -triangle 45] (0,0) -- (9,0) node[right] {\Large $\textsc{OMM}_{1}$};
          
          \draw[helpline] (0.3,7.7) -- (-0.2,7.55) node[left] {\Large $F^*_n$};      
          \draw[helpline] (8,0) -- (8.7,1) node[right] {\Large {\tt 11111111}};
          \draw[helpline] (0,8) -- (0.7,9) node[right] {\Large {\tt 00000000}};
          \draw[helpline] (4,4) -- (4.7,5) node[right] {\Large {\tt 00101011}};
          
          \draw[helpline] (0,0) -- (-0.2,0) node[left] {\Large $0$};
          \draw[helpline] (0,4) -- (-0.2,4) node[left] {\Large $n/2$};
          \draw[helpline] (0,8) -- (-0.2,8) node[left] {\Large $n$};
          \draw[helpline] (0,0) -- (0,-0.2) node[below] {\Large $0$};
          \draw[helpline] (4,0) -- (4,-0.2) node[below] {\Large $n/2$};
          \draw[helpline] (8,0) -- (8,-0.2) node[below] {\Large $n$};
          \draw[function] (0,8) -- (8,0);
		  \foreach \x/\y in {0/8,1/7,2/6,3/5,4/4,5/3,6/2,7/1,8/0}
                \filldraw[pareto] (\x,\y) circle (4pt);              
		\end{tikzpicture}
        }
        \caption{\omm.}
		\label{fig:omm}
    \end{subfigure}
    \begin{subfigure}[t]{0.50\textwidth}
        \centering
    \resizebox{\linewidth}{!}{
		\begin{tikzpicture}[domain=0:8,xscale=0.8,yscale=0.8,scale=1, every shadow/.style={shadow xshift=0.0mm, shadow yshift=0.4mm}]
          \tikzstyle{helpline}=[black,very thick];
          \tikzstyle{function}=[blue,thick];
          \tikzstyle{individual}=[blue,very thick];
          \tikzstyle{pareto}=[red,very thick];
		  \draw[black!40,line width=0.2pt,xstep=1,ystep=1] (0,0) grid (8.5,8.5);          
          \draw[helpline, -triangle 45] (0,0) -- (0,9) node[above] {\Large $\lotz_{2}$}; 
          \draw[helpline, -triangle 45] (0,0) -- (9,0) node[right] {\Large $\lotz_{1}$};
          
          \draw[helpline] (0,0) -- (-0.2,0) node[left] {\Large $0$};
          \draw[helpline] (0,4) -- (-0.2,4) node[left] {\Large $n/2$};
          \draw[helpline] (0,8) -- (-0.2,8) node[left] {\Large $n$};
          \draw[helpline] (0,0) -- (0,-0.2) node[below] {\Large $0$};
          \draw[helpline] (4,0) -- (4,-0.2) node[below] {\Large $n/2$};
          \draw[helpline] (8,0) -- (8,-0.2) node[below] {\Large $n$};
          
          \draw[helpline] (0.3,0.7) -- (-0.2,0.55) node[left] {\Large $F_1$};    
		  \draw[helpline] (0.3,5.7) -- (-0.2,5.55) node[left] {\Large $F_{n-2}$};
		  \draw[helpline] (0.3,6.7) -- (-0.2,6.55) node[left] {\Large $F_{n-1}$};
		  \draw[helpline] (0.3,7.7) -- (-0.2,7.55) node[left] {\Large $F^*_n$};      
		  \draw[helpline] (0,0) -- (0.7,-0.5) node[right] {\Large {\tt 0******1}};          
          \draw[helpline] (4,0) -- (4.7,-0.5) node[right] {\Large {\tt 11110**1}};
          
          \draw[helpline] (8,0) -- (8.7,1) node[right] {\Large {\tt 11111111}};
          \draw[helpline] (0,8) -- (0.7,9) node[right] {\Large {\tt 00000000}};
          \draw[helpline] (4,4) -- (4.7,5) node[right] {\Large {\tt 11110000}};
          
          \foreach \x in {1,...,8}
          	\draw[function] (0,\x) -- (\x,0);
		  \filldraw[individual] (0,0) circle (4pt);
		  \foreach \x/\y in {0/1,1/0}
                \filldraw[individual] (\x,\y) circle (4pt);
		  \foreach \x/\y in {0/2,1/1,2/0}
                \filldraw[individual] (\x,\y) circle (4pt);	  
		  \foreach \x/\y in {0/3,1/2,2/1,3/0}
                \filldraw[individual] (\x,\y) circle (4pt);	
          \foreach \x/\y in {0/4,1/3,2/2,3/1,4/0}
                \filldraw[individual] (\x,\y) circle (4pt);
          \foreach \x/\y in {0/5,1/4,2/3,3/2,4/1,5/0}
                \filldraw[individual] (\x,\y) circle (4pt);
          \foreach \x/\y in {0/6,1/5,2/4,3/3,4/2,5/1,6/0}
                \filldraw[individual] (\x,\y) circle (4pt);	      
		  \foreach \x/\y in {0/8,1/7,2/6,3/5,4/4,5/3,6/2,7/1,8/0}
                \filldraw[pareto] (\x,\y) circle (4pt);              
		  
		\end{tikzpicture}
        }
        \caption{\lotz.}
		\label{fig:lotz}
    \end{subfigure}
    \caption{Sketches of the functions \omm (\textsc{OMM}) and \lotz with $n=8$.}
    \label{fig:functions}
\end{figure*}

We focus our analysis on two simple MOEAs, SEMO and its variant called Global SEMO (GSEMO) because of their simplicity and suitability for a rigorous theoretical analysis. SEMO starts with an initial solution $s\in\{0,1\}^n$ chosen uniformly at random. All non-dominated solutions are stored in the population~$P$. Then, it selects a solution $s$ uniformly at random from $P$, and a new search point $s'$ is produced by the mutation step which flips one bit of $s$ chosen uniformly at random. The new population contains for each non-dominated fitness vector $f(s)$, $s\in P \cup \{s'\}$, one corresponding search point (dominated individuals are removed from the population), and in the case where $f(s')$ is not dominated, $s'$ is added to $P$ (see Algorithm~\ref{alg:semo}).

\begin{algorithm}[!ht]
  \begin{algorithmic}[1]
  	\STATE Choose an initial solution $s\in\{0,1\}^n$ uniformly at random.
    \STATE Determine $f(s)$ and initialize $P:=\{s\}$.
    \WHILE{stopping criterion \NOT met}
    	\STATE{Choose $s$ uniformly at random from $P$.}
        \STATE{Choose $i \in \{1,\ldots,n\}$ uniformly at random.}
        \STATE{Define $s'$ by flipping the $i$-th bit of $s$.}
        \IF{$s'$ is \NOT dominated by any individual in $P$} 
        	\STATE{Add $s'$ to $P$, and remove all individuals weakly dominated by $s'$ from~$P$.}
        \ENDIF
    \ENDWHILE
  \end{algorithmic}
  \caption{SEMO}
  \label{alg:semo}
\end{algorithm}

For SEMO, we know that the expected running time on \omm is at most $O(n^2\log n)$ \cite{Giel2010}. We prove that this upper bound is asymptotically tight.

\begin{theorem}
\label{the:semo_oneminmax}
The expected time for SEMO to cover the whole Pareto front on \omm is $\Theta(n^2 \log n)$.
\end{theorem}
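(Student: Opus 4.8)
The plan is to exploit the very rigid structure that \omm imposes on SEMO's population. Since for \omm no search point strictly dominates another, the population is never thinned by dominance: it is exactly a set of distinct ``levels'', where the level of an individual is its number of ones $\ones{x} \in \{0,\dots,n\}$, with at most one representative per level, and the goal is to occupy all $n+1$ levels. First I would prove the key structural invariant: the set of occupied levels is always a contiguous interval $[a,b]$ that grows monotonically. Monotonicity holds because an offspring $s'$ with $\ones{s'}=c$ weakly dominates only individuals of its own level $c$, so survival selection merely replaces the representative of level $c$ (or adds it when $c$ is new) and never deletes an occupied level. Contiguity follows by induction: local mutation maps a level-$j$ parent only to levels $j-1$ or $j+1$, so from an interval $[a,b]$ the only reachable new levels are $a-1$ and $b+1$, keeping the occupied set an interval starting from the initial singleton.

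Next I would compute the per-step probability of making progress, i.e.\ of extending the interval. From an interval $[a,b]$ with $m=b-a+1$ occupied levels, the interval extends to the left only by selecting the (unique) boundary individual at level $a$, which has probability $1/m$, and flipping one of its $a$ one-bits, which has probability $a/n$; symmetrically it extends to the right by selecting the level-$b$ individual (probability $1/m$) and flipping one of its $n-b$ zero-bits (probability $(n-b)/n$). These two events are disjoint, so the progress probability is
\begin{displaymath}
p_{\mathrm{ext}}(m) \;=\; \frac{1}{m}\cdot\frac{a}{n} + \frac{1}{m}\cdot\frac{n-b}{n} \;=\; \frac{a+(n-b)}{mn} \;=\; \frac{(n+1)-m}{mn},
\end{displaymath}
where the last equality uses that $a+(n-b)=(n+1)-m$ is precisely the number of still-missing levels. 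The crucial point, which makes the analysis clean, is that $p_{\mathrm{ext}}(m)$ depends only on the number $m$ of occupied levels and not on the position of the interval.

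Finally I would assemble the running time. Because each accepted extension adds exactly one level, the number of occupied levels increases by one at a time and the process passes through every value $m=1,2,\dots,n+1$; while the count equals $m$ the interval, and hence $p_{\mathrm{ext}}(m)$, is fixed, so the number of steps spent at that count is geometrically distributed with mean $1/p_{\mathrm{ext}}(m)$. By linearity of expectation the expected covering time is exactly
\begin{displaymath}
\E{T} \;=\; \sum_{m=1}^{n} \frac{mn}{(n+1)-m} \;=\; n\bigl((n+1)\harm - n\bigr) \;=\; \Theta(n^2\log n),
\end{displaymath}
using $\harm=\Theta(\log n)$. Since the summands grow like $n^2/j$ as the number of missing levels $j=(n+1)-m$ shrinks, the harmonic tail supplies the logarithmic factor, and this single computation yields both the known $O(n^2\log n)$ upper bound and the matching $\Omega(n^2\log n)$ lower bound claimed in the theorem.

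I expect the only genuinely delicate step to be the rigorous justification of the interval invariant, in particular arguing from the weak-domination rule that survival selection can neither split the interval nor remove an already-covered level; everything afterwards reduces to the elementary coupon-collector-style harmonic sum above. A secondary point to state carefully is that the random initial search point only fixes the location of the starting singleton interval and, by the position-independence of $p_{\mathrm{ext}}(m)$, has no effect on $\E{T}$.
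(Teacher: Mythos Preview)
Your proof is correct but takes a genuinely different route from the paper. The paper cites the $O(n^2\log n)$ upper bound from prior work and only proves the lower bound: it fixes one side of the population (tracking $X_t=\min_{x\in P_t}\ones{x}$, assuming the initial point has at least $n/2$ ones, and symmetrically otherwise), bounds the probability of decreasing $X_t$ by $\frac{1}{n/2-X_t}\cdot\frac{X_t}{n}$ using $|P_t|\ge n/2-X_t$, and sums the resulting waiting times. Your argument instead establishes the full contiguous-interval invariant and the position-independence identity $p_{\mathrm{ext}}(m)=\frac{(n+1)-m}{mn}$, which lets you compute the \emph{exact} expected time $n\bigl((n+1)\harm-n\bigr)$ and obtain both bounds at once. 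The paper's approach is lighter in that it never needs the two-sided interval structure or the observation that $a+(n-b)$ collapses to a function of $m$ alone; your approach buys a sharper and self-contained result, at the cost of having to justify the interval invariant carefully (which you correctly identify as the only delicate step).
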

\begin{proof}
The upper bound was shown in~\cite{Giel2010}. For the lower bound, let $\ones{x}$ denote the number of 1-bits and $\zeros{x}$ denote the number of 0-bits in $x$. Define $X_t := \min_{x \in P_t}\{\ones{x}\}$ if for the initial search point $x_0$ we have $\ones{x_0} \ge n/2$, and $X_t := \min_{x \in P_t}\{\zeros{x}\}$ otherwise. Note that, by definition, $X_0 \ge n/2$. Now, $X_t = 0$ is a necessary requirement for covering the whole Pareto front at time~$t$. Hence we lower-bound the sought time by the expected time for $X_t$ to reach value~0.

Since only local mutations are used, $X_t$ can only decrease by~1. In order to decrease $X_t$ we have to select a parent with Hamming distance $X_t$ to $0^n$ or $1^n$, respectively, which happens with probability~$1/|P_t|$. Note that $|P_t| \ge n/2 - X_t$ as the population contains individuals with $X_t, X_t + 1, \dots, \lceil n/2\rceil$ ones. Moreover, mutation needs to flip one of the $X_t$ bits differing to $0^n$ or $1^n$, respectively. Hence
\[
\Prob{X_{t+1} = X_t - 1 \mid X_t} \le \frac{1}{n/2 - X_t} \cdot \frac{X_t}{n}.
\]
The total expected time to decrease $X_t$ to~0 is thus at least
\[
\sum_{j=1}^{n/2} \left(\frac{n}{2}-j\right) \frac{n}{j} =
\sum_{j=1}^{n/2} \frac{n^2}{2j} - \sum_{j=1}^{n/2} n
= \frac{n^2 \ln n}{2} - O(n^2)
\]
as $\sum_{j=1}^{n/2} 1/j \ge \ln n/2 = \ln n - \ln 2$.
\end{proof}

The reason for the relatively high running time is that the growing population slows down exploration. The population can only expand on the Pareto front in case search points with the current highest or lowest number of ones are chosen (corresponding to a minimum $X_t$-value in the proof of Theorem~\ref{the:semo_oneminmax}). Once the population has grown to a size of $\mu = \Theta(n)$, the probability that this happens has decreased to $\Theta(1/n)$. This means that only a $\sim 1/n$-th fraction of the time the algorithm has a chance to expand on the Pareto front! Uniform parent selection means that most steps are spent idling. The same effect occurs for SEMO on \lotz as proved in~\cite{Laumanns2004}.

\begin{theorem}[Lemma~2 in~\cite{Laumanns2004}]
\label{the:semo_lotz}
The expected time for SEMO to cover the whole Pareto front on \lotz is $\Theta(n^3)$.
\end{theorem}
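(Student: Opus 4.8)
The plan is to reuse the standard two-phase decomposition for \lotz and to obtain the matching lower bound by the population-size argument that already drives the proof of Theorem~\ref{the:semo_oneminmax}. Write $s(x) := \LO{x}+\TZ{x}$ and recall the structural property highlighted above: every non-Pareto-optimal search point has only strictly better or strictly worse Hamming neighbours, never incomparable ones. The first step is to exploit this to show that, as long as no Pareto-optimal point has been found, the SEMO population consists of a single individual. Indeed, starting from one point, a one-bit flip either yields a strictly dominating offspring (flipping the first $0$-bit increases the number of leading ones, flipping the last $1$-bit increases the number of trailing zeroes, in both cases without decreasing the other objective), a point of equal fitness, or a strictly dominated one; none of these can produce a second, incomparable population member. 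Consequently Phase~1 is governed by a single individual whose value $s$ is non-decreasing and increases whenever one of its (two, always distinct for a non-optimal point) improving bits is flipped, an event of probability at least $2/n$. Since $s$ must increase at most $n$ times to reach its maximum $n$, the expected duration of Phase~1 is $O(n^2)$.

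For Phase~2 I would argue that the set of covered Pareto-optimal points stays a contiguous interval $\{1^i 0^{n-i} \mid \ell \le i \le r\}$ of the front $\front$. This holds because a front point $1^i 0^{n-i}$ has exactly the two adjacent front points $1^{i\pm 1}0^{n-i\mp1}$ as incomparable neighbours, each reached by flipping a single, unique bit, so coverage can only grow outward from the current endpoints. When $k = r-\ell+1$ points are covered the population has size $k$, and extending it requires selecting one of the (at most two) endpoints and flipping its one correct bit, an event of probability between $1/(kn)$ and $2/(kn)$. Hence the expected time to grow coverage from $k$ to $k+1$ lies between $kn/2$ and $kn$, and summing over $k = 1, \dots, n$ yields the $O(n^3)$ upper bound together with Phase~1.

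The lower bound follows from the same estimate restricted to large populations, exactly mirroring the idling phenomenon in the proof of Theorem~\ref{the:semo_oneminmax}: for the final $n/2$ extensions we have $k \ge n/2$, so each extension has probability at most $2/(kn) \le 4/n^2$ and therefore expected waiting time at least $n^2/4$; these $n/2$ extensions alone cost $\Omega(n^3)$ in expectation. Combining the two bounds gives $\Theta(n^3)$.

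I expect the main obstacle to be the careful justification of the structural claims rather than the arithmetic: namely, verifying that off the front SEMO never holds more than one individual (so that Phase~1 cannot be slowed by a growing population) and that on the front the covered region is always an interval whose only growth points are its two endpoints. Once these combinatorial facts about the one-bit neighbourhood of \lotz points are nailed down, both the $O(n^3)$ upper bound and the $\Omega(n^3)$ lower bound reduce to evaluating $\sum_{k=1}^{n} kn = \Theta(n^3)$, in the same spirit as the population-driven estimate used for \omm.
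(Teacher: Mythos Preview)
The paper does not supply its own proof of this theorem; it merely quotes the result as Lemma~2 of Laumanns et al.\ (2004). Your two-phase argument---showing that the population stays a singleton until the front is reached (yielding $O(n^2)$ for Phase~1, which the paper also invokes later in Lemma~\ref{lem:semlotz}) and that afterwards the covered front is a contiguous interval that can only grow at its endpoints with probability $\Theta(1/(kn))$---is correct and is precisely the classical argument from that reference. The structural facts you flag as the ``main obstacle'' (no incomparable one-bit neighbours off the front; exactly the two adjacent front points as incomparable neighbours on the front) are exactly the properties of \lotz the paper highlights in Section~\ref{sec:pre}, and your verification of them is sound.
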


In the case of GSEMO, a new solution $s'$ is created by flipping each bit from a solution $s$ independently with probability~$1/n$, then it proceeds in the same way as SEMO (see Algorithm \ref{alg:gsemo}). For GSEMO we have upper bounds of the same order, $O(n^2 \log n)$ for \omm~\cite{Giel2010} and $O(n^3)$ for \lotz~\cite{Giel2003}, though no lower matching bound is available in the literature for the case of GSEMO on \lotz. 

\begin{algorithm}[!ht]
  \begin{algorithmic}[1]
  	\STATE Choose an initial solution $s\in\{0,1\}^n$ uniformly at random.
    \STATE Determine $f(s)$ and initialize $P:=\{s\}$.
    \WHILE{stopping criterion \NOT met}
    	\STATE{Choose $s$ uniformly at random from $P$.}
        \STATE{Define $s'$ by flipping each bit in $s$ independently with probability $1/n$.}
        \IF{$s'$ is \NOT dominated by any individual in $P$} 
        	\STATE{Add $s'$ to $P$, and remove all individuals weakly dominated by $s'$ from~$P$.}
        \ENDIF
    \ENDWHILE
  \end{algorithmic}
  \caption{GSEMO}
  \label{alg:gsemo}
\end{algorithm}

We remark that \lotz can also be optimised more efficiently, in time $O(n^2)$, by a tailored algorithm that uses local search along individual objectives during initialisation to locate both extreme points of the Pareto front, $0^n$ and $1^n$, and then uses crossover to produce the whole Pareto front from these points~\cite{Qian2013}. Incorporating a fairness mechanism which makes sure that each individual produces roughly the same number of offspring into SEMO leads to the algorithm FEMO. For FEMO a runtime bound of $\Theta(n^2 \log n)$ has been given in~\cite{Laumanns2004}. The runtime analysis provided for \ibea in~\cite{Nguyen2015} gives an upper bound of $O(n^2 \log n)$ and $O(n^3)$ for \omm and \lotz, respectively, if the population size is set to $n+1$ and therefore does not improve on the results for SEMO given in~\cite{Laumanns2004}.

Our aim is to develop rigorous runtime bounds of SEMO and GSEMO introducing different diversity-based parent selection. We want to study how these mechanisms help to improve the performance of the MOEAs.

\section{Diversity-Based Parent Selection}
\label{sec:divparsec}
Hypervolume-based EAs have become very popular in recent years for multi-objective optimisation where the hypervolume indicator is used as a measurement of the coverage of the population~\cite{Auger2012,Zitzler2004}. The hypervolume indicator measures a set of elements corresponding to images of the individuals with the volume of the dominated portion of the objective space. It is calculated based on the selection of a reference point. In particular, given a reference point $r \in \mathbb{R}^m$, the hypervolume indicator is defined on a set $P \subset S$ as
\[
\Ih{P} = \lambda\left( \bigcup_{x\in P} [\f{1}{x},r_1] \times [\f{2}{x},r_2] \times \cdots \times [\f{m}{x},r_m]\right)
\]
where $\lambda(S)$ denotes the Lebesgue measure of a set $S$ and $[\f{1}{a},r_1] \times [\f{2}{a},r_2] \times \cdots \times [\f{m}{a},r_m]$ is the orthotope with $f(a)$ and $r$ in opposite corners. We define the contribution of an element $x \in P$ to the hypervolume of a set of elements $P$ as
\[
\HVC{x}{P} = \Ih{P} - \Ih{P \setminus \{x\}}.
\]

The calculation of hypervolume indicator and the calculation of the contribution are both NP-hard when the number of objectives $m$ is a parameter \cite{Bringmann2010,Bringmann2012}. However, both can be computed in polynomial time if $m$ is fixed. In the following, for bi-objective problems like \omm and \lotz, we can directly calculate the contribution of an element by taking into account the two direct neighbours in the objective space as follows.

\begin{definition}[Hypervolume contribution]
\label{def:hypcon}
For a given reference point $r = (r_1, r_2)$, we set $\f{1}{x_0}=r_1$ and $\f{2}{x_{\mu+1}}=r_2$ where $x_0$ and $x_{\mu+1}$ are individuals used to estimate the hypervolume contribution, and hereinafter $\mu$ denotes the size of the current population in SEMO/GSEMO. Furthermore, we assume that $r_1=\f{1}{x_0} < \f{1}{x_1}$, $r_2 = \f{2}{x_{\mu+1}} < \f{2}{x_{\mu}}$.

Let the population be sorted according to the value of $\f{1}{x_i}$ such that
\[
\f{1}{x_0}<\f{1}{x_1}<\f{1}{x_2} < \cdots < \f{1}{x_\mu}.
\]
The contribution of an individual $x_i$ to the hypervolume of a population $P$ is then given by
\[
\HVC{x_i}{P}=(\f{1}{x_i}-\f{1}{x_{i-1}})\cdot(\f{2}{x_{i}}-\f{2}{x_{i+1}}).
\]
\end{definition}

Another diversity metric applied to our framework is the \emph{crowding distance} used in \nsgaii~\cite{Deb2002}. The crowding distance operator measures the density of solutions surrounding a particular solution in the population. A solution with a lower crowding distance value implies that the region occupied by this solution is crowded by other solutions. The solutions with a higher crowding distance value are chosen/preferred for reproduction.

Since both SEMO and GSEMO use a population of non-dominated individuals, \ie all individual have the minimum non-domination rank possible, we can directly apply the crowding distance as our diversity metric (Algorithm~\ref{alg:crowdist}). The population is sorted for each objective function value in increasing order of magnitude. Thereafter, for each objective function, the boundary solutions (solutions with smallest and largest function values) are assigned an infinite distance value. All other intermediate solutions are assigned a distance value equal to the absolute normalised difference of the function values of two adjacent solutions (see Line~\ref{alg:crowdist:dist} of Algorithm~\ref{alg:crowdist}, $f_{m}^{\max}$ and $f_{m}^{\min}$ are the maximum and minimum values of the $m$-th objective function).

\begin{algorithm}[!ht]
  \begin{algorithmic}[1]
   	\STATE Let $l:=|P|$.
    \FORALL{$i$ individuals $\in P$}
    	\STATE Set $P[i]_{\mathrm{.distance}}:=0$
    \ENDFOR
    \FORALL{$m$ objectives}
    	\STATE Sort $P$ according to $m$ objective function value in ascending order.
		\STATE $P[1]_{\mathrm{.distance}}:=P[l]_{\mathrm{.distance}}:=\infty$.
        \FOR{ $i = 2$ \TO $l-1$ }
        	\STATE $P[i]_{\mathrm{.distance}}:=P[i]_{\mathrm{.distance}}+\frac{P[i+1]_{.m}-P[i-1]_{.m}}{f_{m}^{\max}-f_{m}^{\min}}$ \label{alg:crowdist:dist}
        \ENDFOR
    \ENDFOR
  \end{algorithmic}
  \caption{Crowding Distance Operator}
  \label{alg:crowdist}
\end{algorithm}

As in previous theoretical studies, we measure the running time as the number of function evaluations needed to fully cover the Pareto front. This common practice is motivated by the fact that function evaluations are often the most time-consuming operations. Note that for SEMO and GSEMO the number of function evaluations coincides with the number of generations needed as each generation only creates one new offspring whose fitness is evaluated.

For the hypervolume contribution (HVC), according to Definition~\ref{def:hypcon}, the reference point can be defined so that the current extreme individuals in the population and individuals in intermediate empty areas have a high diversity score, and a strong influence for the algorithm. In the case of the crowding distance contribution (CDC) the same behaviour applies, extreme points in the search space receive a high distance while intermediate individuals surrounded by empty areas receive a higher distance than the ones where the area is more crowded.

With this information we can define selection mechanisms capable of selecting those extreme points and pushing the spread of the population toward the outer areas of the search space. However, as our theoretical analysis will show, in case the population already contains the extreme points of the Pareto front ($0^n$ and $1^n$ for \omm and \lotz), we need to be flexible enough to ignore those points and select intermediate individuals surrounded by empty areas in the search space to fully cover the Pareto front.

The selection mechanisms defined in this paper use the previous diversity contribution metrics but any other metric can be easily applied that follows the behaviour mentioned before. Firstly, we define 3 different rank-based selection schemes in which the probability of selecting individuals with a high diversity score is higher than for individuals with a lower diversity score (see Definition~\ref{def:selpre}). The first is called \emph{exponential}; it is a rather aggressive scheme that strongly favours the best-ranked individuals and has a very small tail. The second is called \emph{power law} as it follows a power law distribution; it is much less aggressive with a fat tail and yet a constant probability of selecting the first constant ranks. And finally, the third ranking scheme is called \emph{harmonic}; it is the least aggressive scheme with a fat tail and only a probability of $O(1/(\log \mu))$ for selecting the best few individuals.

\begin{definition}[Rank-based selection schemes]
\label{def:selpre}
The probability of selecting the $i$-th ranked individual is
\begin{alignat*}{3}
 \frac{2^{-i}}{\displaystyle \sum_{j=1}^{\mu} 2^{-j}}, &\quad \frac{1/i^2}{\displaystyle \sum_{j=1}^{\mu} \frac{1}{j^2}}, &\quad \frac{1/i}{\displaystyle \sum_{j=1}^{\mu} \frac{1}{j}}
\end{alignat*}
for the exponential, power law, and harmonic ranking scheme (see Figure~\ref{fig:probtails}), 
respectively.
\begin{figure}[!ht]
	\centering
		\begin{tikzpicture}
			\begin{axis}[ymin=0,
            	ymax=0.65,
            	grid=both,
            	enlargelimits=false,
            	legend style={at={(0.6,0.93)},anchor=north},
            	legend cell align=left, 
            	xlabel={Rank of diversity metric},
            	ylabel={Selection probability},
            	scale=0.8]
				\pgfplotsset{samples at={1, ..., 10}}
				\addplot {2^(-x)};
				\addplot {x^(-2)/1.55};
				\addplot {x^(-1)/2.93};
			\legend{Exponential: $\sim 2^{-i}$, Power law: $\sim 1/i^2$, Harmonic: $\sim 1/i$}
			\end{axis}
		\end{tikzpicture}
        \caption{Rank-based selection schemes and their selection probabilities.}
		\label{fig:probtails}
\end{figure}
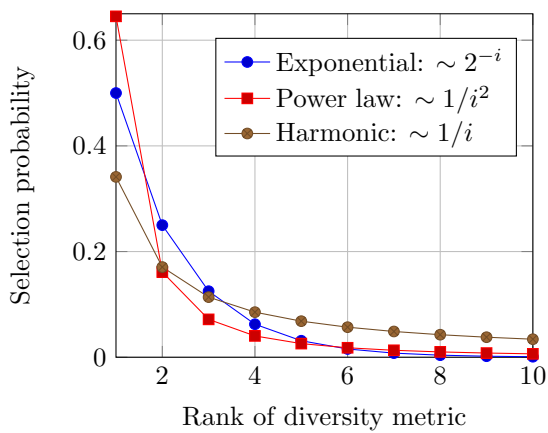
\end{definition}

Secondly, we use the classical tournament selection, but with a specific tournament size of~$\mu$, the current size of the population. This means we choose $\mu$ individuals uniformly at random with replacement from the population and then select the individual with the highest diversity contribution from this multi-set. Selection with replacement implies that there is a chance of not selecting particular individuals, while other individuals might be picked multiple times.

Now we introduce the diversity-based parent selection into SEMO (see Algorithm~\ref{alg:semopardiv}) and GSEMO (see Algorithm~\ref{alg:gsemopardiv}). Instead of using uniform parent selection, we estimate the diversity contribution for all the individuals in the population, and a parent is selected according to the diversity-based parent selection method. Then we continue as in the original algorithms. Our parent selection mechanisms are not limited to these algorithms and may prove useful on a much broader class of MOEAs.

\begin{algorithm}[!ht]
  \begin{algorithmic}[1]
  	\STATE Choose an initial solution $s\in\{0,1\}^n$ uniformly at random.
    \STATE Determine $f(s)$ and initialize $P:=\{s\}$.
    \WHILE{stopping criterion \NOT met}
    	\STATE Estimate diversity contribution $\forall s \in P$. \label{alg:semo:con}
    	\STATE Choose $s\in P$ according to the parent selection mechanism. \label{alg:semo:par}
        \STATE Choose $i \in \{1,\ldots,n\}$ uniformly at random. \label{alg:semo:chi}
        \STATE Define $s'$ by flipping the $i$-th bit of $s$. \label{alg:semo:mut}
        \IF{$s'$ is \NOT dominated by any individual in $P$} 
        \STATE Add $s'$ to $P$, and remove all individuals weakly dominated by $s'$ from~$P$. \ENDIF
    \ENDWHILE
  \end{algorithmic}
  \caption{SEMO with diversity-based parent selection}
  \label{alg:semopardiv}
\end{algorithm}

\begin{algorithm}[!ht]
  \begin{algorithmic}[1]
  	\STATE Choose an initial solution $s\in\{0,1\}^n$ uniformly at random.
    \STATE Determine $f(s)$ and initialize $P:=\{s\}$.
    \WHILE{stopping criterion \NOT met}
    	\STATE Estimate diversity contribution $\forall s \in P$.
    	\STATE Choose $s\in P$ according to the parent selection mechanism. 
        \STATE{Create $s'$ by flipping each bit in $s$ independently with probability $1/n$.} 
        \IF{$s'$ is \NOT dominated by any individual in $P$}
        	\STATE Add $s'$ to $P$, and remove all individuals weakly dominated by $s'$ from~$P$.
        \ENDIF
    \ENDWHILE
  \end{algorithmic}
  \caption{GSEMO with diversity-based parent selection}
  \label{alg:gsemopardiv}
\end{algorithm}

\section{On Diversity-Based Progress}
\label{sec:progress}

We show that diversity-based parent selection mechanisms can achieve a fast spread on the Pareto front. The following arguments and analyses consider the situation where the population is located on the Pareto front. This is trivially the case for \omm as all search points are Pareto-optimal. For \lotz we later supply a separate analysis that covers the process of reaching the Pareto front.

For \omm and \lotz the most promising parents are those that have a Hamming neighbour that is on the Pareto set, but not yet contained in the population. We call these search points \emph{\good}:
\begin{definition}[\good individuals]
\label{def:good}
With reference to a population~$P$ and a fitness function with Pareto front $\front$ and corresponding Pareto set $\paretoset$, we call a search point $x \in P \cap \paretoset$ \emph{\good} if there is a Hamming neighbour $y$ of $x$ such that $y \in \paretoset$ but $f(y) \not \in f(P)$ where $f(P)$ denotes the set of objective vectors of population $P$. Otherwise, $x$ is called \emph{\bad}.
\end{definition}

A diversity measure should encourage the selection of such \good individuals.
\begin{definition}[\property]
\label{def:prop}
We call a measure $\score{x}{P}$ \emph{\property} on $S \subseteq \{0, 1\}^n$ with respect to a fitness function with Pareto front $\front$ if for all populations $P$ and all $x, y \in P \cap \paretoset \cap S$ we have the following: if $x$ is \bad and $y$ is \good then $\score{x}{P} < \score{y}{P}$.
\end{definition}

Note that the definition is restricted to a subset $S$ of the search space. The reason is to allow the exclusion of certain search points for which the property is not true. For \omm and \lotz, the property does not hold for the extreme points on the Pareto front, $0^n$ and $1^n$. We show that both HVC and CDC are both \property on all other search points. For HVC we assume that the reference point is dominated by $(-1, -1)$. In other words, the reference point can be any point $(r_1, r_2)$ with $r_1 \le -1$ and $r_2 \le -1$.

\begin{lemma}
\label{lem:hypergood}
The hypervolume contribution $\HVC{x}{P}$ is \property on $\{0, 1\}^n \setminus \{0^n, 1^n\}$ for both \omm and \lotz if the reference point is dominated by $(-1, -1)$.
\end{lemma}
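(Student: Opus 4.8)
The plan is to reduce the statement to a short computation with the hypervolume contribution formula of Definition~\ref{def:hypcon}. Throughout I would work with a population located on the Pareto front (the relevant case here, cf.\ the remark at the start of this section), so that every $x \in P$ satisfies $f_2(x) = n - f_1(x)$ and all $f_1$-values are distinct integers in $\{0,\dots,n\}$; this holds identically for \omm and \lotz, hence a single argument covers both functions. Writing $a_i := f_1(x_i)$ for the population sorted by increasing $f_1$-value, Definition~\ref{def:hypcon} collapses to $\HVC{x_i}{P} = (a_i - a_{i-1})(a_{i+1} - a_i)$, a product of two positive integer gaps, where the boundary terms are $a_0 = r_1 \le -1$ and $a_{\mu+1} = n - r_2 \ge n+1$ by the assumption that the reference point is dominated by $(-1,-1)$.

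Next I would translate the notions \good and \bad of Definition~\ref{def:good} into a statement about these gaps. For both functions the only Pareto-optimal Hamming neighbours of a front point with $f_1$-value $k$ are the two front points with $f_1$-values $k-1$ and $k+1$ (for \omm every Hamming neighbour changes the number of ones by one; for \lotz the point $1^k0^{n-k}$ has exactly the two front neighbours $1^{k-1}0^{n-k+1}$ and $1^{k+1}0^{n-k-1}$). Hence a front point is \good precisely when at least one of the values $k-1$, $k+1$ is absent from $f(P)$, i.e.\ when at least one of its adjacent gaps is $\ge 2$. Since a value $k-1$ lying immediately below $a_i$ forces $a_{i-1} = a_i - 1$ whenever it is present (consecutive integers leave no room for an intermediate value), a \bad interior point has both gaps equal to $1$ and therefore hypervolume contribution exactly $1$.

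The key case distinction is then the position of the two compared points in the sorted order. I would first observe that a point of $S = \{0,1\}^n \setminus \{0^n,1^n\}$ that is extreme in the population cannot be \bad: if $x = x_1$ is the minimum and $x \ne 0^n$ then $a_1 \ge 1$, so its neighbour of value $a_1 - 1 \ge 0$ is a missing front point, making $x$ \good (symmetrically for the maximum and $1^n$). Consequently any \bad $x \in S$ is strictly interior with $\HVC{x}{P}=1$. It then remains to show $\HVC{y}{P} \ge 2$ for every \good $y \in S$: if $y$ is interior, one adjacent gap is $\ge 2$ and the other $\ge 1$; if $y$ is the extreme minimum, the reference-point condition gives the boundary gap $a_1 - r_1 \ge 1-(-1) = 2$, and symmetrically $a_{\mu+1} - a_\mu \ge (n+1)-(n-1) = 2$ at the maximum. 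In all cases $\HVC{x}{P} = 1 < 2 \le \HVC{y}{P}$, which is exactly the \property condition of Definition~\ref{def:prop}.

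I expect the main obstacle to be the careful treatment of the extreme points of the population: the interior formula does not apply verbatim there, and it is precisely at these boundaries that the hypothesis on the reference point is needed — without $r_1,r_2 \le -1$ an extreme \good point could have a boundary gap of only $1$ and thus tie with a \bad interior point, breaking the strict inequality. Everything else is a routine consequence of the front values being consecutive-integer-spaced, so the only genuine content is verifying the gap bounds in the boundary cases and confirming that extreme points in $S$ are automatically \good.
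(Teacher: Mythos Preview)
Your argument is correct and follows essentially the same route as the paper: both compute $\HVC{x_i}{P}=(a_i-a_{i-1})(a_{i+1}-a_i)$ on the front, show that a \bad interior point has both gaps equal to~$1$ and hence contribution exactly~$1$, and that any \good point has at least one gap $\ge 2$ and hence contribution $\ge 2$. Your treatment is in fact more careful than the paper's at the extremes of the sorted population: you explicitly verify that an extreme point in $S$ cannot be \bad and that the reference-point assumption $r_1,r_2\le -1$ forces the boundary gap to be $\ge 2$, whereas the paper leaves these cases implicit in the use of the virtual points $x_0,x_{\mu+1}$.
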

\begin{proof}
Let us consider an individual $x_i \notin \{0^n,1^n\}$ of the sorted population according to $f_1$, using the notation from Definition~\ref{def:hypcon}. If $x_i$ is bad, then there are Hamming neighbours $x_{i-1}$ and $x_{i+1}$ of $x_i$ in $P$, the $\HVC{x_i}{P}$ is the minimum possible, since $\f{1}{x_i}-\f{1}{x_{i-1}}=1$ and $\f{2}{x_{i}}-\f{2}{x_{i+1}}=1$ yielding $\HVC{x_i}{P} = (\f{1}{x_i}-\f{1}{x_{i-1}}) \cdot (\f{2}{x_{i}}-\f{2}{x_{i+1}}) = 1$.

Now, let us consider a good search point $y_i$, that is, $y_{i-1}$ or $y_{i+1}$ is not a Hamming neighbour of $y_i$. Then we have $\f{1}{y_i}-\f{1}{y_{i-1}} > 1$ or $\f{2}{y_{i}}-\f{2}{y_{i+1}} > 1$ and in any case $\HVC{y_i}{P} = (\f{1}{y_i}-\f{1}{y_{i-1}}) \cdot (\f{2}{y_{i}}-\f{2}{y_{i+1}}) > 1$. Thus $\HVC{y_i}{P} > \HVC{x_i}{P}$, which completes the proof.\qedhere
\end{proof}

\begin{lemma}
\label{lem:cdcgood}
The crowding distance contribution $\CDC{x}{P}$ is \property on $\{0, 1\}^n \setminus \{0^n, 1^n\}$ for both \omm and \lotz.
\end{lemma}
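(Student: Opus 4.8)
The plan is to mirror the proof of Lemma~\ref{lem:hypergood}, replacing the product form of the hypervolume contribution by the additive, normalised form of the crowding distance. First I would record the shape of $\CDC{x_i}{P}$ for a non-extreme individual. Since both SEMO and GSEMO maintain a population of mutually non-dominated points with pairwise distinct objective vectors, sorting $P$ by $f_1$ ascending yields exactly the reverse of sorting by $f_2$ ascending; hence in both objective sortings the two neighbours of $x_i$ are the points $x_{i-1}$ and $x_{i+1}$ of Definition~\ref{def:hypcon}. Writing $D_1=f_1^{\max}-f_1^{\min}$ and $D_2=f_2^{\max}-f_2^{\min}$ for the (population-dependent, but for a fixed $P$ constant) normalisers, this gives
\[
\CDC{x_i}{P} = \frac{f_1(x_{i+1})-f_1(x_{i-1})}{D_1} + \frac{f_2(x_{i-1})-f_2(x_{i+1})}{D_2}.
\]

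Next I would analyse the four one-sided gaps $f_1(x_i)-f_1(x_{i-1})$, $f_1(x_{i+1})-f_1(x_i)$ and their $f_2$-counterparts, each of which is at least $1$ because the objective values are distinct integers. The structural input is that $f_1(z)+f_2(z)\le n$ for every $z$, with equality exactly on the Pareto front; together with the non-domination of the population this forces that whenever the immediate $f_1$-predecessor (resp.\ successor) of a front point $x_i$ attains the adjacent value $f_1(x_i)\mp 1$, it must in fact be the objective-adjacent Pareto point. Consequently a bad point has all four gaps equal to $1$, so $\CDC{x_i}{P}=2/D_1+2/D_2$, a value independent of which bad point is chosen. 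For a good point at least one objective-adjacent Pareto point is missing from $f(P)$, and here I would reuse the gap facts underlying the HVC proof: an uncovered upper neighbour forces the right-hand $f_2$-gap to be at least $2$, while an uncovered lower neighbour forces the left-hand $f_1$-gap to be at least $2$. As the remaining gaps stay $\ge 1$, the sum obeys $\CDC{y_i}{P}\ge 2/D_1+3/D_2$ or $\ge 3/D_1+2/D_2$, both strictly larger than $2/D_1+2/D_2$.

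Finally I would account for the extreme solutions, which receive crowding distance $\infty$. A bad point always has both objective-adjacent Pareto neighbours present and is therefore never a population extreme, so it always receives the finite value $2/D_1+2/D_2$; a good point that is a current extreme receives $\infty$, which again exceeds the bad value. Since $0^n$ and $1^n$ are excluded by $S$, this establishes $\CDC{x}{P}<\CDC{y}{P}$ for every bad $x$ and good $y$ in $P\cap\paretoset\cap S$. The step I expect to be most delicate is pinning down the gaps for \lotz, where the population may contain non-Pareto-optimal (yet non-dominated) individuals, so a neighbour whose $f_1$-value differs by one from $x_i$ need not itself be Pareto optimal; the identification of sorted neighbours with adjacent Pareto points must then be extracted from the antichain property together with $f_1+f_2\le n$. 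For \omm this difficulty vanishes, since every search point is Pareto optimal and $f_2=n-f_1$, so the two sortings and their gaps coincide up to mirroring and the argument specialises immediately.
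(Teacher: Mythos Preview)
Your proposal is correct and follows essentially the same line as the paper's proof: a bad non-extreme Pareto point has both sorted neighbours at objective distance one, giving the minimal crowding distance, while a good point has a strictly larger gap on at least one side; extreme points are handled by the $\infty$ assignment. Your write-up is considerably more explicit than the paper's, and in particular your observation that $f_1(z)+f_2(z)\le n$ with equality exactly on the front forces a distance-one sorted neighbour of a Pareto point to realise the adjacent Pareto objective vector is precisely the justification for the \lotz case that the paper leaves implicit.
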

\begin{proof}
By Algorithm~\ref{alg:crowdist} the search points with the minimum and maximum $f_1$ score in the population are going to have infinite diversity score, regardless of the objective chosen to sort the population. 

Let us say that there is a bad individual $x_i$ with Hamming neighbours $x_{i-1}$ and $x_{i+1}$ contained in $P$. According to the numerator of Line~\ref{alg:crowdist:dist} of Algorithm~\ref{alg:crowdist}, the difference between the $\f{1}{x_{i-1}}$ (or $\f{2}{x_{i-1}}$) and $\f{1}{x_{i+1}}$ is the minimum possible, which means the minimum $\CDC{x_i}{P}$ is assigned to the individual~$x_i$.

In the case of a good search point $y_i$, that is, $y_{i-1}$ or $y_{i+1}$ are not Hamming neighbours of $y_i$, the difference between the next contained search points in $P$ is higher. If the difference between $\f{1}{y_i}$ (or $\f{2}{y_i}$) is higher than the minimum possible, this means $\CDC{x_i}{P} < \CDC{y_i}{P}$ which completes the proof.
\end{proof}

Note that in both above measures $0^n$ and $1^n$, if contained in the population, will always receive a high score, regardless of whether they are \good or \bad. If they are \bad, there is a high chance that a \bad individual will be selected as parent in a diversity-based parent selection mechanism. With this in mind, the probability of selecting a \good individual can be bounded from below as follows.

\begin{lemma}
\label{lem:r1-r2-r3}
Let $\score{x}{P}$ be a \property measure on $\{0, 1\}^n \setminus \{0^n, 1^n\}$. Consider either \omm or \lotz and assume the population~$P$ is a subset of the Pareto set, $P \subseteq \paretoset$. Imagine $P$ being sorted according to non-increasing $\score{x}{P}$ values. Consider a parent selection mechanism based on $\score{x}{P}$ such that $r_i$ is the probability of selecting the $i$-th element of~$P$ in the sorted sequence. Then the probability of selecting a \good individual is at least $\min\{r_1, r_2, r_3\}$ unless $P$ already covers the Pareto front.
\end{lemma}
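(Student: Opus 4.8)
The plan is to argue that, among the (at most two) special points $0^n$ and $1^n$ that may carry a high score without being \good, at least one of the top three ranks must be occupied by a genuinely \good individual, so that selecting one of ranks $1$, $2$, or $3$ guarantees a \good parent. First I would dispose of the trivial case: if $P$ does not yet cover the Pareto front, then since $P \subseteq \paretoset$ there is some Pareto-optimal objective vector not yet represented, and a standard connectedness argument for \omm and \lotz shows that the boundary between covered and uncovered objective values yields at least one \good individual in $P$. Hence the set of \good individuals is nonempty.

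The key observation is that the only points on which \score is \emph{not} guaranteed to be \property are the two extremes $0^n$ and $1^n$. By Lemmas~\ref{lem:hypergood} and~\ref{lem:cdcgood}, \score is \property on $\{0,1\}^n \setminus \{0^n, 1^n\}$, which means that after sorting $P$ by non-increasing \score, every \bad individual \emph{other than} $0^n$ or $1^n$ is strictly outscored by every \good individual. Consequently, in the sorted sequence, the only \bad individuals that can appear \emph{before} a \good one are $0^n$ and $1^n$ themselves. Since there are at most two such points, at most the first two positions of the sorted sequence can be occupied by \bad individuals; therefore the third position — or an earlier one — must hold a \good individual. More carefully, I would check the three cases according to how many of $\{0^n, 1^n\}$ are present in $P$ and \bad: if neither is a \bad outscorer, rank $1$ is \good; if exactly one is, rank $2$ is \good; if both are, rank $3$ is \good. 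In every case one of ranks $1,2,3$ is \good, so selecting that rank, which happens with probability $r_1$, $r_2$, or $r_3$ respectively, selects a \good individual. The probability of selecting a \good individual is therefore at least $\min\{r_1, r_2, r_3\}$.

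The subtle point I would be careful about is that ties in \score could in principle let a \bad non-extreme individual share a rank with a \good one, but the \property definition gives a \emph{strict} inequality $\score{x}{P} < \score{y}{P}$ whenever $x$ is \bad and $y$ is \good (both outside the extremes), so no \bad non-extreme individual can be placed ahead of a \good one under any valid sorting; this makes the counting of "at most two bad individuals ahead" rigorous. The main obstacle — and the only place needing genuine problem-specific input rather than pure bookkeeping — is establishing that a \good individual exists at all whenever the front is not yet covered. For \omm every search point is Pareto-optimal and the objective values form a contiguous integer interval $\{0, \dots, n\}$ in the number of ones, so any gap in coverage has a boundary individual whose Hamming neighbour (flipping one bit toward the gap) is an uncovered Pareto point; for \lotz the analogous argument uses that the Pareto set $\{1^i 0^{n-i}\}$ is a Hamming path and a missing point is adjacent to a present one. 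Once existence is secured, the rank-counting argument above completes the proof.
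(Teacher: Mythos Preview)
Your proposal is correct and follows essentially the same approach as the paper: establish that a \good individual exists whenever the front is not covered, then observe that only the two extreme points $0^n$ and $1^n$ can be \bad yet outrank a \good individual, so one of the top three ranks must be \good. Your version is more explicit about the existence argument and the tie-breaking subtlety than the paper's rather terse proof, but the underlying idea is identical; one minor remark is that you need not invoke Lemmas~\ref{lem:hypergood} and~\ref{lem:cdcgood}, since the \property assumption is already part of the hypothesis.
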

\begin{proof}
Before the whole Pareto front is covered by the population $P$, there exists at least one \good individual $x$ in population $P$ with no corresponding Hamming neighbour $s$ in the Pareto set $\paretoset$. Then the individuals which correspond to the Hamming neighbours of the missing point $s$ are \good search points.

Since $\score{x}{P}$ is defined as a \property measure on $\{0, 1\}^n \setminus \{0^n, 1^n\}$, the \good search points have higher contribution than \bad search points that are neither $0^n$ nor $1^n$. Therefore, among the top three ranked elements in $P$, there exists at least one \good individual. The probability of selecting this \good individual is at least $\min\{r_1,r_2,r_3\}$.
\end{proof}

The parent selection mechanisms thus have the following probability of selecting \good individuals.
\begin{lemma}
\label{lem:parent-selection-property-for-good-individual}
In the setting described in Lemma~\ref{lem:r1-r2-r3}, the probability $\pgood$ of selecting a good individual is
\begin{enumerate}
\item $\Omega(1)$ for the exponential and power law ranking schemes,
\item $\Omega(1/\log \mu)$ for the harmonic ranking scheme,
\item $\Omega(1)$ for tournament selection with tournament size~$\mu$.
\end{enumerate}
\end{lemma}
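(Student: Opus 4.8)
The plan is to combine Lemma~\ref{lem:r1-r2-r3}, which already guarantees $\pgood \ge \min\{r_1, r_2, r_3\}$, with explicit estimates of the rank-selection probabilities $r_i$ for each scheme. The key preliminary observation is that for all three rank-based schemes of Definition~\ref{def:selpre} the weights $2^{-i}$, $1/i^2$ and $1/i$ are strictly decreasing in the rank~$i$, so $\min\{r_1,r_2,r_3\}=r_3$. Hence the entire task collapses to lower-bounding $r_3$, which in turn amounts to controlling the normalising constant in the denominator of each distribution.

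For the rank-based schemes this is a short calculation. For the exponential scheme the denominator satisfies $\sum_{j=1}^{\mu} 2^{-j} < 1$, so $r_3 > 2^{-3} = 1/8 = \Omega(1)$. For the power law scheme $\sum_{j=1}^{\mu} 1/j^2 < \pi^2/6 < 2$, whence $r_3 > (1/9)/2 = 1/18 = \Omega(1)$. These two bounds give item~1. For the harmonic scheme $\sum_{j=1}^{\mu} 1/j = \Theta(\log \mu)$, so $r_3 = (1/3)\big/\Theta(\log \mu) = \Omega(1/\log \mu)$, which is item~2.

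For tournament selection I would first make the induced rank probabilities explicit. Because the winner is the highest-scored of $\mu$ independent uniform draws with replacement, the $i$-th ranked individual is selected exactly when it is drawn at least once while none of the $i-1$ strictly higher-ranked individuals is drawn; this yields $r_i = (1-(i-1)/\mu)^\mu - (1-i/\mu)^\mu$. Since $x \mapsto (1-x/\mu)^\mu$ is convex and decreasing, these differences are themselves decreasing in $i$, so again $\min\{r_1,r_2,r_3\}=r_3$. Evaluating $r_3 = (1-2/\mu)^\mu - (1-3/\mu)^\mu$, which tends to $e^{-2}-e^{-3} > 0$ and is bounded below by a positive constant for all relevant $\mu$, gives $r_3 = \Omega(1)$; Lemma~\ref{lem:r1-r2-r3} then delivers item~3.

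The main obstacle is the tournament case, and specifically justifying the formula for $r_i$. Here one must reason carefully about with-replacement sampling and about the fact that up to two high-scoring but \bad boundary points ($0^n$ and $1^n$) may outrank and thereby block the first \good individual from winning a tournament — precisely the situation for which Lemma~\ref{lem:r1-r2-r3} reserves a buffer of three ranks. I would also need to address ties in the \score values (several \good individuals, or the two boundary points, sharing a score): ties can only raise the chance that the tournament winner is \good, so the stated formula remains a valid lower bound, and the degenerate small-$\mu$ cases where the population already nearly covers the front are trivial. Apart from this bookkeeping, every estimate is an elementary bound on a sum or on $(1-c/\mu)^\mu$.
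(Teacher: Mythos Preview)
Your proposal is correct and follows essentially the same approach as the paper: apply Lemma~\ref{lem:r1-r2-r3}, observe that $\min\{r_1,r_2,r_3\}=r_3$, and bound $r_3$ by controlling the normalising sums (exponential $<1$, power law $<\pi^2/6$, harmonic $\le \ln\mu+1$). For tournament selection your closed-form $r_3=(1-2/\mu)^\mu-(1-3/\mu)^\mu$ is in fact algebraically identical to the paper's conditional-probability expression $(1-2/\mu)^\mu\bigl(1-(1-\tfrac{1}{\mu-2})^\mu\bigr)$, so the two arguments coincide; your convexity justification for $r_1\ge r_2\ge r_3$ and your remark on ties are clean additions the paper leaves implicit.
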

\begin{proof}
For the parent selection with the exponential ranking scheme, the probability follows from Lemma~\ref{lem:r1-r2-r3}, which fulfils
\[
r_1 \ge r_2 \ge r_3 = \frac{2^{-3}}{\displaystyle\sum_{j=1}^\mu 2^{-j}} \ge 2^{-3} = \Omega(1).
\]

For the power law ranking scheme, since $\sum_{j=1}^{\mu}\frac{1}{j^2} \le \sum_{j=1}^{\infty}\frac{1}{j^2} = \pi^2/6$, the probability fulfils
\[
r_1 \ge r_2 \ge r_3 = \frac{1/3^2}{\displaystyle\sum_{j=1}^{\mu} \frac{1}{j^2}} \ge \frac{2}{3\cdot\pi^2} = \Omega(1).
\]

In the case of the harmonic ranking scheme, since $\sum_{j=1}^{\mu}\frac{1}{j} \le \ln \mu + 1$, the probability fulfils
\[
r_1 \ge r_2 \ge r_3 = \frac{1/3}{\displaystyle\sum_{j=1}^{\mu} \frac{1}{j}} \ge \frac{1}{3 \cdot(\ln \mu + 1)} = \Omega(1/\log \mu).
\]

For tournament selection, the probability of selecting a good individual is at least $\min\{r_1,r_2,r_3\}$ and $r_1 \ge r_2 \ge r_3$. In order for the individual with the 3rd maximum contribution to be selected in the tournament selection, the individuals with the 1st and 2nd maximum contribution should never be selected in the $\mu$ times (probability of $\left(1-2/\mu\right)^\mu$). And, conditional on this happening, the individual with the 3rd maximum contribution has to be chosen at least once amongst the other $\mu-2$ individuals in the $\mu$ times with probability $1-\left(1-\frac{1}{\mu-2}\right)^\mu$. Hence, the probability of selecting a good individual is at least
\[
\pgood \ge \left(1-\left(1-\frac{1}{\mu-2}\right)^{\mu}\right)\cdot \left(1-\frac{2}{\mu}\right)^{\mu}
\ge \left(1-\frac{1}{e}\right) \cdot \left(1-\frac{2}{\mu}\right)^{\mu}
\]
using $\left(1-\frac{1}{x}\right)^x \le 1/e$ for $x > 1$.
Since $f(x)=\left(1-\frac{1}{x}\right)^x$ is non-decreasing when $x\geq 1$, with $\mu\geq 3$, $\left(1-\frac{2}{\mu}\right)^{\frac{\mu}{2}} \ge \left(1-\frac{2}{3}\right)^{\frac{3}{2}}\ge 0.19.$ Therefore, $\pgood \ge \left(1-\frac{1}{e}\right)\cdot 0.19^2 =\Omega(1).$
\end{proof}

\section{Speedups on \omm}
\label{sec:ommpro}

For any parent selection mechanism defined before, the parent selection is focused on selecting an individual with a high diversity score. In the case of HVC or CDC, having a high diversity contribution means that, apart from the possible exceptions of $0^n$ and $1^n$, the parent will be \good, \ie located in a less populated area of the Pareto front. We show that by preferring \good individuals in the parent selection, SEMO and GSEMO can quickly find the whole Pareto front for \omm.

\begin{lemma}
\label{lem:oneminmax}
Suppose that the probability of selecting a \good individual is at least $\pgood$. Then the expected runtime for SEMO or GSEMO to find all solutions in the Pareto front on \omm is bounded above by $O((n \log{n}) / \pgood)$.
\end{lemma}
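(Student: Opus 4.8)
The plan is to track the number of \emph{missing} objective values and show it obeys a multiplicative drift towards $0$. On \omm every search point is Pareto-optimal, and the population keeps exactly one representative for each number of ones currently present; since an individual is only ever replaced by another with the \emph{same} number of ones, the set of covered values never shrinks. Writing $d$ for the number of values in $\{0,\dots,n\}$ not yet represented, we thus have a non-increasing potential with $d_0 = n$ (the initial population is a single point), and the front is covered exactly when $d=0$. By Definition~\ref{def:good} a \good individual at value $k$ is one whose Hamming-neighbour value $k\pm 1$ has yet to be reached; by Lemmas~\ref{lem:hypergood}/\ref{lem:cdcgood} and~\ref{lem:r1-r2-r3} such an individual is selected as parent with probability at least $\pgood$. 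I would now lower-bound the one-step drift $\E{d-d'\mid d}$, with $d'$ the value after one iteration, and finish with the multiplicative drift theorem.

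For SEMO the computation is clean. Because a single bit flip changes the number of ones by exactly $\pm 1$, the covered values always form an interval $[L,U]$, so the only \good individuals are its two endpoints (all interior points are \bad). When $L\ge 1$ and $U\le n-1$ neither $0^n$ nor $1^n$ is present, so by the \property property these two endpoints occupy the top two ranks; when one end is already complete the surviving endpoint still ranks directly behind the one extreme point. In every case each active endpoint is selected with probability at least $\pgood$ (up to a constant factor for tournament selection), which is exactly the ranking argument behind Lemmas~\ref{lem:r1-r2-r3} and~\ref{lem:parent-selection-property-for-good-individual}. The lower endpoint produces the missing value $L-1$ by flipping one of its $L$ one-bits, with probability $L/n$, and symmetrically the upper endpoint reaches $U+1$ with probability $(n-U)/n$. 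Summing the two contributions gives $\E{d-d'\mid d} \ge \pgood\cdot\frac{L+(n-U)}{n} = \pgood\cdot\frac{d}{n}$, a multiplicative drift with rate $\pgood/n$, so the multiplicative drift theorem yields $\E{T}\le (n/\pgood)(1+\ln n)=O((n\log n)/\pgood)$.

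For GSEMO the same skeleton applies: the relevant move is still the single-bit flip, which standard bit mutation performs with probability at least $\frac1n(1-\frac1n)^{n-1}\ge \frac1{en}$, so an endpoint fill costs only an extra constant factor $e$ and again yields the drift $\pgood\,d/n$ up to constants --- \emph{provided} the covered set is an interval. The main obstacle is precisely that global mutation may flip several bits and skip values, so the covered set need not be contiguous: there can be many \good individuals at the boundaries of interior holes, the two extreme endpoints need no longer lie among the top ranks, and in a phase where only interior holes remain the drift from the extremes vanishes. I would handle this by noting that $d$ is still non-increasing and that an interior hole $v$ can always be filled from the cheap side by flipping a single bit of a neighbour with probability $\Omega(1)$ (since $\max\{v+1,\,n-v+1\}\ge (n+2)/2$), so holes are filled at least as fast as they are created and never dominate the runtime. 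Concretely I would either refine the drift to count interior fills as well, or split the run into an interior-filling phase (fast) and the progress of the extreme values towards $0$ and $n$, bounding the latter by the harmonic sum $\sum_{k=1}^{n}\frac{en}{\pgood\,k}=O((n\log n)/\pgood)$.

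The crux in both cases is the \emph{position-dependent} success probability of mutation: filling a value at distance $k$ from the boundary costs order $k/n$, so a uniform per-step bound would only give $O(n^2/\pgood)$, and the logarithmic factor is recovered exactly by the multiplicative (equivalently harmonic) structure of these probabilities. I expect the hardest part to be the GSEMO bookkeeping for interior holes, together with verifying that a \emph{useful} \good individual --- one adjacent to a still-missing value on its cheap side --- is selected with probability $\Omega(\pgood)$ even when many \good individuals compete for selection.
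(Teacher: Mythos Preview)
Your multiplicative-drift argument for SEMO contains a genuine gap. The hypothesis of the lemma is that \emph{some} good individual is selected with probability at least $\pgood$; it does \emph{not} say that \emph{each} good individual is selected with probability at least $\pgood$. Yet your drift computation
\[
\E{d-d'\mid d}\ \ge\ \pgood\cdot\frac{L}{n}\ +\ \pgood\cdot\frac{n-U}{n}\ =\ \pgood\cdot\frac{d}{n}
\]
implicitly uses the latter: it adds a $\pgood$-weighted contribution from \emph{both} endpoints. Under the lemma's actual assumption the adversary may put the entire mass $\pgood$ on the endpoint with the smaller success probability, say $L=1$ while $n-U=n/2$, giving a drift of only $\pgood/n$ although $d\approx n/2$. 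The multiplicative rate $\pgood/n$ therefore fails. Your appeal to the ranking argument behind Lemmas~\ref{lem:r1-r2-r3} and~\ref{lem:parent-selection-property-for-good-individual} does not repair this: those lemmas also only lower-bound the probability that \emph{one} of the top-ranked individuals is chosen, and in any case you would then be proving a weaker, scheme-specific statement rather than the abstract lemma. The same difficulty reappears, magnified, in your GSEMO sketch: with many interior holes there are many good individuals, and the hypothesis gives you no control over \emph{which} one is selected, so neither ``the extremes advance'' nor ``the cheap side of a hole is chosen'' is guaranteed.

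The paper sidesteps this entirely with an \emph{accounting} argument that does not track $d$ at all. It calls a step \emph{relevant} if a good parent is selected (so the expected waiting time for a relevant step is $1/\pgood$) and then bounds the total expected number of relevant steps, summed over the value $i$ of the selected parent. The key observation is local: once both neighbours $i-1$ and $i+1$ are present, the individual with $i$ ones is no longer good and can never again be selected in a relevant step. Hence the expected number of relevant steps at value $i$ is at most the expected number of mutations of a parent with $i$ ones needed to create both neighbours, namely $en/i + en/(n-i)$. Summing over $i$ gives $O(n\log n)$ relevant steps, uniformly for SEMO and GSEMO, with no need to know which good individual the selection picked or whether the covered set is an interval. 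This is precisely the bookkeeping device your GSEMO paragraph is groping for; adopting it removes both the SEMO gap and the interior-hole headache in one stroke.
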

\begin{proof}
We call a step a \emph{relevant step} if the algorithm selects a \good parent on the Pareto front. We show in the following that $O(n \log n)$ relevant steps are sufficient for covering the whole Pareto front of \omm, regardless of irrelevant steps performed. This shows the claim as the expected time for a relevant step is $1/\pgood$.

We use the \emph{accounting method} (see, \eg Section~17.2 in~\cite{Cormen2009}) to bound the number of relevant steps. Specifically, we count the number of relevant steps spent in selecting a \good parent with $i$ ones. Summing up (upper bounds on) all these times across all~$0 \le i \le n$ will imply the claim.

Note that, once potential gaps at $i-1$ and $i+1$ are filled, there can be no more relevant steps at $i$ ones, due to the definition of a relevant step. Hence the expected number of relevant steps at $i$ ones is bounded by the expected number of mutations from~$i$ needed to fill both these gaps. If an individual with $i$ ones, $0 < i < n$, is selected as parent, the probability of mutation creating an individual with $i-1$ ones is at least $i/n \cdot (1-1/n)^{n-1} \ge i/(en)$, and the probability of mutation creating an individual with $i+1$ ones is at least $(n-i)/n \cdot (1-1/n)^{n-1} \ge (n-i)/(en)$ (this holds both for SEMO and GSEMO; for SEMO the factor $1/e$ can be removed). The time for filling both gaps is at most $en/i + en/(n-i)$. Hence there are at most $en/i + en/(n-i)$ relevant steps selecting a parent with $i$ ones. In the special cases of $i=0$ or $i=n$ the time to fill the neighbouring gaps simplifies to $en/n=e$.

Summing over all~$i$, the expected total number of relevant steps is hence at most
\[
2e + \sum_{i=1}^{n-1} \left(\frac{en}{i} + \frac{en}{n-i}\right)
= 2e + 2\sum_{i=1}^{n-1} \frac{en}{i}
= 2\sum_{i=1}^{n} \frac{en}{i}
\le 2en (\log{n}+1).
\]
Where the summation $\harm=\sum_{i=1}^{n}1/i$ is known as the \emph{harmonic number} and satisfies $\harm=\ln{n} + \Theta(1)$ this completes the proof.
\end{proof}

Combining Lemma~\ref{lem:parent-selection-property-for-good-individual} and Lemma~\ref{lem:oneminmax}, we have proved the following results. Note that the population size $\mu$ is always at most $n+1$ on \omm and \lotz, hence for the harmonic ranking scheme, $\pgood = \Omega(1/\log \mu) = \Omega(1/\log n)$.
\begin{theorem}
Consider SEMO and GSEMO with diversity-based parent selection using any diversity measure that is diversity-favouring on $\{0, 1\}^n \setminus \{0^n, 1^n\}$ (e.\,g.\ HVC or CDC). Then the expected time to find the whole Pareto front on \omm is bounded by $O(n \log n)$ for the exponential and power law ranking schemes, and for tournament selection with tournament size~$\mu$. It is bounded by $O(n \log^2 n)$ for the harmonic ranking scheme.
\end{theorem}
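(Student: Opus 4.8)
The plan is to assemble this theorem as a direct corollary of the machinery already in place, namely Lemma~\ref{lem:oneminmax} (the runtime bound $O((n\log n)/\pgood)$ expressed in terms of the probability $\pgood$ of selecting a \good parent) and Lemma~\ref{lem:parent-selection-property-for-good-individual} (scheme-by-scheme lower bounds on $\pgood$). The only genuine work is to check that the hypotheses of these lemmas hold for \omm and then to perform the substitution, converting each $\pgood$ bound into a runtime bound.

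First I would verify that the diversity measures in the statement actually qualify. By Lemma~\ref{lem:hypergood} and Lemma~\ref{lem:cdcgood}, both HVC and CDC are \property on $\{0,1\}^n \setminus \{0^n,1^n\}$, which is exactly the hypothesis required by Lemma~\ref{lem:r1-r2-r3} and hence inherited by Lemma~\ref{lem:parent-selection-property-for-good-individual}. Crucially, on \omm every search point is Pareto-optimal, so the condition $P \subseteq \paretoset$ demanded by Lemma~\ref{lem:r1-r2-r3} holds automatically at every step; unlike the \lotz case treated later, no separate argument is needed to guarantee the population already lies on the Pareto set. Thus the $\pgood$ estimates apply verbatim throughout the run.

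Next I would read off the $\pgood$ bounds from Lemma~\ref{lem:parent-selection-property-for-good-individual}: $\pgood = \Omega(1)$ for the exponential and power law schemes and for tournament selection with tournament size~$\mu$, and $\pgood = \Omega(1/\log\mu)$ for the harmonic scheme. Since distinct population members have distinct numbers of ones on \omm, the population size satisfies $\mu \le n+1$, so $\log\mu = O(\log n)$ and the harmonic bound sharpens to $\pgood = \Omega(1/\log n)$. Substituting each into the $O((n\log n)/\pgood)$ estimate of Lemma~\ref{lem:oneminmax} then gives $O(n\log n)$ for the first three schemes and $O(n\log n \cdot \log n) = O(n\log^2 n)$ for the harmonic scheme, which is precisely the claim.

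The hard part here is essentially nonexistent: the theorem is a bookkeeping combination, with all the analytical difficulty already discharged in the progress lemma and in the estimates of $\pgood$. The only points that warrant a sentence of care are the two routine observations above — that \omm keeps the population inside $\paretoset$ so Lemma~\ref{lem:r1-r2-r3} applies unmodified, and that $\mu \le n+1$ turns $\Omega(1/\log\mu)$ into $\Omega(1/\log n)$ for the harmonic case.
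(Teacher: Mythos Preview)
Your proposal is correct and matches the paper's approach essentially verbatim: the paper likewise just says the theorem follows by combining Lemma~\ref{lem:parent-selection-property-for-good-individual} with Lemma~\ref{lem:oneminmax}, noting that $\mu \le n+1$ so that $\pgood = \Omega(1/\log\mu) = \Omega(1/\log n)$ in the harmonic case. Your explicit remark that $P \subseteq \paretoset$ holds automatically on \omm is a helpful clarification the paper leaves implicit.
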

As both SEMO and GSEMO with the classical uniform parent selection need time $\Theta(n^2 \log n)$ on \omm, our parent selection schemes lead to speedups of order $\Theta(n)$ and $\Theta(n/\log n)$, respectively.

\section{Speedups on \lotz}
\label{sec:lotzpro}

We now turn to the function \lotz. In contrast to \omm, where all individuals are Pareto optimal, for \lotz we have to estimate the time for the population to reach the Pareto front. For SEMO the approach to the Pareto front can be estimated easily since SEMO keeps only one individual in the population. For local mutations as used in SEMO, whenever an offspring is created, either the offspring dominates the parent, or the parent dominates the offspring (or both, if they have the same function values). The population size remains unchanged before there is a solution on the Pareto front. For any parent on the Pareto front, SEMO only accepts its offspring if it is also on the Pareto front, otherwise the offspring is dominated by the parent.

\begin{lemma}
\label{lem:semlotz}
The expected time for SEMO to reach the Pareto front is $O(n^2)$. Assume that afterwards the probability of selecting a \good individual in the population is at least $\pgood$. The expected runtime for SEMO to reach a population covering the whole Pareto front on \lotz is bounded above by $O(n^2/\pgood)$.
\end{lemma}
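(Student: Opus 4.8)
The plan is to treat the two phases separately: first reaching the Pareto front, then spreading along it. For the first phase I would use the fact, recalled before the statement, that until a Pareto-optimal point is found the population of SEMO has size exactly one, since every single-bit neighbour of an off-front point is comparable to it and hence cannot be added. I would then track the potential $\LO{x} + \TZ{x}$ of the unique individual $x$; a point lies on the front precisely when this potential equals $n$. Off the front the potential cannot equal $n-1$, as the single bit lying between the leading ones and the trailing zeros would then have to be simultaneously $0$ and $1$; hence it is at most $n-2$, and there are always two \emph{distinct} bit positions whose flip strictly increases it, namely the first $0$ after the leading ones (raising $\LO{x}$) and the last $1$ before the trailing zeros (raising $\TZ{x}$). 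Both such offspring dominate the parent and are accepted, every other accepted move is a middle-bit flip that leaves the potential unchanged, and potential-decreasing moves are rejected. Thus the potential is non-decreasing and rises by at least one with probability at least $2/n$ per step, so the at most $n$ increments cost expected time $O(n^2)$.

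For the second phase I would mirror the proof of Lemma~\ref{lem:oneminmax}. Once a front point is present the population stays entirely on the front, because any off-front offspring of a front parent is dominated by that parent, and the front points form a path in which $1^i 0^{n-i}$ has exactly the two front Hamming neighbours $1^{i-1}0^{n-i+1}$ and $1^{i+1}0^{n-i-1}$. I would call a step \emph{relevant} when a \good parent is selected; by assumption this happens with probability at least $\pgood$ while the front is not yet covered, a \good individual being guaranteed to exist by Lemma~\ref{lem:r1-r2-r3}. Using the accounting method I would charge each relevant step to the position $i$ of the selected good parent, noting that $i$ can be good only while one of its two neighbouring front points is still missing, so that the relevant steps charged to $i$ are bounded by the number of mutations from $i$ needed to fill those two gaps.

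The one genuine difference from the \omm analysis, and the step I would watch most carefully, is the per-gap success probability. On \lotz each missing neighbour of position $i$ is produced by flipping a single \emph{specific} bit (position $i$ or $i+1$), so each gap is filled with probability only $1/n$, instead of the larger $i$-dependent probabilities of order $i/n$ and $(n-i)/n$ available on \omm. Consequently each position costs at most $O(n)$ relevant steps rather than the $\Theta(n/i) + \Theta(n/(n-i))$ of the \omm bound, and summing over all $n+1$ positions gives $O(n^2)$ relevant steps, a linear rather than a harmonic sum. Multiplying the $O(n^2)$ relevant steps by the expected waiting time $1/\pgood$ for a relevant step yields $O(n^2/\pgood)$; since $\pgood \le 1$ this also absorbs the $O(n^2)$ cost of reaching the front, establishing the claim.
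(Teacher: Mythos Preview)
Your proposal is correct and follows essentially the same two-phase structure and accounting argument as the paper. The only difference is that the paper dispatches the first phase by citing Lemma~1 of \cite{Laumanns2004}, whereas you give a self-contained potential argument via $\LO{x}+\TZ{x}$; your observation that this potential cannot equal $n-1$ off the front, and hence that two distinct improving single-bit flips are always available, is a clean way to obtain the $2/n$ success probability directly. The second phase matches the paper's proof almost verbatim, including the key point that on \lotz each neighbouring gap requires one specific bit and hence costs $n$ rather than $n/i$ relevant steps.
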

\begin{proof}
The time for the population to find the first Pareto-optimal point is $O(n^2)$ and has already been proved in Lemma~1 in \cite{Laumanns2004}. So we can focus on the time required to find the whole Pareto front. By the \emph{accounting method} used to prove Lemma~\ref{lem:oneminmax} and the definition of relevant step: the algorithm selects a \good parent on the Pareto front,  we count the number of relevant steps spent selecting a \good parent with $i$ leading ones, $1^i0^{n-i}$, and sum up all these times across all $0\leq i \leq n$ to prove the claim.

The potential gaps consist of non-existing non-dominated individuals at $i-1$ and $i+1$ ($1^{i-1}0^{n-i+1}$ and $1^{i+1}0^{n-i-1}$, respectively). It is necessary to fill those gaps by including these search points in the population. Once this has happened, there can be no more relevant steps at $i$ leading ones. So the expected number of mutations at $i$ leading ones is bounded by the expected number of mutations from $i$ needed to fill $i-1$ and $i+1$. If $1^i0^{n-i}$ is selected as parent, the probability of mutation creating $1^{i-1}0^{n-i+1}$ or $1^{i+1}0^{n-i-1}$ is $1/n$, respectively. The time for filling both gaps (if existent) is at most $n + n$. Hence there are in expectation at most $2n$ relevant steps selecting a parent with $i$ leading ones.

Summing over all $i$, the expected total number of relevant steps is hence at most
\[
\sum_{i=0}^{n} 2n= 2n(n+1)=O(n^2).
\]

Noting that the expected waiting time for a relevant step is $1/\pgood$. Thus the overall expected runtime for SEMO to achieve a population covering the whole Pareto front on \lotz is upper bounded by $O(n^2) + O(n^2/\pgood) = O(n^2/\pgood)$.
\end{proof}

Combining Lemma~\ref{lem:parent-selection-property-for-good-individual} and Lemma~\ref{lem:semlotz},
we now have proved the following results.

\begin{theorem}
Consider SEMO with diversity-based parent selection using any diversity measure that is diversity-favouring on $\{0, 1\}^n \setminus \{0^n, 1^n\}$ (e.\,g.\ HVC or CDC). Then the expected time to find the whole Pareto front on \lotz is bounded by $O(n^2)$ for the exponential and power law ranking schemes, and for tournament selection with tournament size~$\mu$. It is bounded by $O(n^2 \log n)$ for the harmonic ranking scheme.
\end{theorem}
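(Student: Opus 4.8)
The plan is to obtain the theorem as a direct combination of the two lemmas already proved for this setting. Lemma~\ref{lem:semlotz} states that SEMO reaches the Pareto front of \lotz in expected time $O(n^2)$ and, once the population lies on the front, covers the whole front in expected time $O(n^2/\pgood)$, where $\pgood$ is any lower bound on the per-generation probability of selecting a \good individual. Lemma~\ref{lem:parent-selection-property-for-good-individual} supplies exactly such bounds for each of the four parent selection schemes, under the sole hypothesis that the underlying diversity measure is \property on $\{0,1\}^n \setminus \{0^n, 1^n\}$. Both \hvc and \cdc satisfy this by Lemma~\ref{lem:hypergood} and Lemma~\ref{lem:cdcgood}, so the framework applies to the measures named in the statement.

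First I would record the values of $\pgood$ from Lemma~\ref{lem:parent-selection-property-for-good-individual}: for the exponential and power law ranking schemes and for tournament selection with tournament size~$\mu$ we have $\pgood = \Omega(1)$, whereas for the harmonic scheme $\pgood = \Omega(1/\log \mu)$. Substituting these into the bound $O(n^2/\pgood)$ of Lemma~\ref{lem:semlotz} immediately yields $O(n^2)$ for the three schemes with constant success probability, and $O(n^2 \log \mu)$ for the harmonic scheme.

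The only remaining bookkeeping step is to replace $\log \mu$ by $\log n$ in the harmonic case. Since the Pareto front of \lotz consists of the $n+1$ points $\{1^i 0^{n-i} \mid 0 \le i \le n\}$ and the population only ever retains non-dominated search points, one per attained objective vector, the population size satisfies $\mu \le n+1$. Hence $\log \mu = O(\log n)$ and the harmonic bound becomes $O(n^2 \log n)$, completing the claim. I expect no genuine obstacle here, as the heavy lifting already resides in the two lemmas being combined; the one point worth stating explicitly is that the \property guarantee excludes the extreme points $0^n$ and $1^n$. Lemma~\ref{lem:r1-r2-r3} already absorbs this, observing that at most these two points can outrank a \good individual, so a \good parent always sits among the top three ranked individuals, which is precisely why the constant and $\Omega(1/\log\mu)$ lower bounds on $\pgood$ survive intact when fed into Lemma~\ref{lem:semlotz}.
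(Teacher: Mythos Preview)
Your proposal is correct and matches the paper's own argument essentially verbatim: the paper simply states that the theorem follows by combining Lemma~\ref{lem:parent-selection-property-for-good-individual} with Lemma~\ref{lem:semlotz}, and the $\mu \le n+1$ observation you supply for the harmonic case is exactly the remark the paper makes just before the analogous \omm theorem.
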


The analysis of GSEMO turns out to be more difficult than the analysis of SEMO. The reason is that the approach to the Pareto front becomes harder to analyse. With global mutations, GSEMO can create incomparable search points while approaching the Pareto front. This means that the population can expand in size while approaching the Pareto front, and even after the whole population has reached the Pareto front, it is possible to create search points off the Pareto front that are accepted in the population.

Experiments in Section~\ref{sec:exp} indicate that this behaviour does not slow down the algorithm by more than a constant factor. However, proving that the bound $O(n^2)$ for SEMO also holds for GSEMO turns out to be very challenging. We therefore take a different approach and analyse a modified variant of GSEMO that is easier to analyse. Experiments presented in Section~\ref{sec:exp} confirm that this modification does not significantly change the average runtime (inspecting Tables~\ref{tab:explotz} and \ref{tab:explranlotz}, the quotients of average times for the modified GSEMO and those for the original GSEMO across all parent selection mechanisms are $0.88$ for $\HVC{-1}{-1}$, $1.19$ for $\HVC{-n}{-n}$, and $1.27$ for CDC, averaging to $1.1$ are close to~1 in many settings and always in the interval $[0.48, 2.03]$).

The idea behind this modification is to simplify the approach to the Pareto front by restricting parent selection to search points that are maximal with regards to a linear combination of both objectives. 
\begin{definition}[$\ld$-dominant attribute]
\label{def:ldom}
Let $\LD{x}=\LO{x}+\TZ{x}$, where $\LO{x}$ and $\TZ{x}$ denotes the total number of leading ones and the total number of trailing zeros of a certain individual $x$, respectively.
\end{definition}

We modify GSEMO in such a way that it only picks parents with maximal $\ld$\nobreakdash-dominant attribute in the population (see Algorithm~\ref{alg:mgsemo}), and also the computation of the diversity contribution is restricted to these search points. This has two effects: it simplifies and facilitates the analysis of the individuals while they are approaching the Pareto front. While the original GSEMO can store incomparable search points with different $\ld$\nobreakdash-values in the population, the modified GSEMO only considers incomparable search points with maximum $\ld$\nobreakdash-value. In addition, since all $x$ individuals on the Pareto front have the largest possible value of $\LD{x} = n$, once the Pareto front is reached, the algorithm only selects individuals on the Pareto front as parents according to their diversity contribution. 

We first bound the expected time to reach the Pareto front.

\begin{algorithm}[tb]
  \begin{algorithmic}[1]
  	\STATE Choose an initial solution $s\in\{0,1\}^n$ uniformly at random.
    \STATE Determine $f(s)$ and initialize $P:=\{s\}$.
    \WHILE{stopping criterion \NOT met}
        \STATE Let $P' \subseteq P$ be the set of all search points with a maximum $\ld$-dominant attribute in~$P$.\STATE Estimate diversity contribution $\forall s \in P'$ w.\,r.\,t.\ the population~$P'$.
        \STATE Choose $s\in P'$ according to parent selection mechanism.
        \STATE Create $s'$ by flipping each bit of $s$ independently with probability $1/n$.
        \IF{$s'$ is \NOT dominated by any individual in $P$} 
        	\STATE Add $s'$ to $P$, and remove all individuals weakly dominated by $s'$ from~$P$. 
        \ENDIF
    \ENDWHILE
  \end{algorithmic}
  \caption{Modified Global SEMO with diversity-based parent selection}
  \label{alg:mgsemo}
\end{algorithm}

\begin{lemma}
\label{lem:lotz_g1}
The expected time for the modified GSEMO to reach the Pareto front is bounded above by $O(n^2)$.
\end{lemma}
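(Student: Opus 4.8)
The plan is to track the maximum $\ld$-dominant value in the population and show that it climbs to $n$ within $O(n^2)$ generations by a fitness-level argument. First I would record the identity $\LD{x} = \LO{x} + \TZ{x} = \f{1}{x} + \f{2}{x}$ and observe that $\LD{x} \le n$ always, with equality exactly when $x = 1^i0^{n-i}$, i.e.\ when $x$ lies on the Pareto front $\front$. Hence the population reaches $\front$ precisely when $\max_{x\in P}\LD{x} = n$. I would therefore study the potential $L_t := \max_{x\in P_t}\LD{x}$ and argue it is non-decreasing: an accepted offspring only removes weakly dominated individuals, and since $y \succeq z$ implies $\LD{y} \ge \LD{z}$, no individual attaining the current maximum $\ld$-value can be discarded in favour of an offspring of strictly smaller $\ld$-value, whereas an offspring of larger value raises the maximum.

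The crucial step is to lower-bound the probability that $L_t$ strictly increases in one generation. Because the modified GSEMO restricts both the diversity computation and the parent selection to the subset $P'$ of individuals with maximal $\ld$-dominant attribute (Definition~\ref{def:ldom}), every selected parent $x$ satisfies $\LD{x} = L_t$, regardless of which diversity measure or selection distribution is used. If $L_t < n$, the chosen parent is not on the front, so $\LO{x} = a < n$ and position $a+1$ carries a $0$; moreover that position cannot lie inside the trailing-zeros block, since otherwise $x = 1^a0^{n-a}$ would be Pareto-optimal. Flipping exactly that single bit and no other, which occurs with probability $(1/n)(1-1/n)^{n-1} \ge 1/(en)$, raises $\LO{x}$ by at least one while leaving $\TZ{x}$ unchanged, producing an offspring $s'$ with $\LD{s'} \ge L_t + 1$. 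As every individual in $P$ has $\ld$-value at most $L_t < \LD{s'}$, the offspring is dominated by none of them and is accepted, so $L_{t+1} > L_t$.

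With the per-step success probability bounded below by $1/(en)$ whenever $L_t < n$, I would finish with a standard waiting-time computation. The potential $L_t$ is integer-valued and must increase at most $n$ times to climb from its initial value to $n$; each increase costs expected time at most $en$ (a geometric waiting time with success probability $\ge 1/(en)$), so by linearity of expectation the total expected time to reach the Pareto front is at most $n \cdot en = O(n^2)$.

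The main obstacle I anticipate is not the drift estimate itself but the two structural checks that make it rigorous: verifying that $L_t$ genuinely cannot decrease under the weak-domination removal rule, and confirming that any parent with $\LD{x} < n$ always exposes a single bit whose flip strictly increases $\ld$. Both reduce to the identity $\LD{x} = \f{1}{x} + \f{2}{x}$ together with the fact that $\ld$ attains $n$ only on $\front$. Once these are established, the restriction to $P'$ renders the progress probability independent of the parent selection scheme, which is exactly what makes the approach to the front of the modified GSEMO tractable where the original GSEMO resists analysis.
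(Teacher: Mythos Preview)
Your proposal is correct and follows essentially the same fitness-level argument as the paper: track $L_t = \max_{x\in P_t}\LD{x}$, note it is non-decreasing, and bound the per-step improvement probability by $\Omega(1/n)$ using a single-bit flip on a parent guaranteed to have maximal $\ld$-value. The only cosmetic difference is that the paper exploits both the first $0$-bit and the last $1$-bit to get $2/(en)$ instead of your $1/(en)$, but this affects only the constant; your additional care in justifying non-decrease of $L_t$ and acceptance of the offspring via $\LD{x}=\f{1}{x}+\f{2}{x}$ is sound and slightly more explicit than the paper's own argument.
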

\begin{proof}
According to Definition~\ref{def:ldom}, before reaching the Pareto front, the solution with $\max_{x\in P}(\LD{x})$ is selected to generate an offspring. Consider the event of only flipping the first $0$-bit or the last $1$-bit of the selected individual. Since the offspring from this event has a higher value of one of the objectives than its parent which is of the maximum $\LD{x}$ in the population, the offspring is non-dominated by any individuals in the population and is accepted by the algorithm. Hence, the probability of increasing $\max_{x\in P}(\LD{x})$ is at least
\[
2\cdot\frac{1}{n}\cdot\left(1-\frac{1}{n}\right)^{n-1}\geq\frac{2}{en}.
\]

Throughout the process, the value of $\max_{x\in P}(\LD{x})$ in the population never goes down. Therefore, the overall expected runtime for GSEMO with this selection scheme to reach the Pareto front is at most
\[
\sum_{\ld_{\max}=0}^{n-2}\frac{en}{2}=O(n^2).\qedhere
\]
\end{proof}

\begin{lemma}
\label{lem:mgsemolotz}
Assume that the probability of selecting a good individual in the population is at least $\pgood$. The expected time for the modified GSEMO to reach a population covering the whole Pareto front on \lotz is bounded above by $O(n^2/\pgood)$.
\end{lemma}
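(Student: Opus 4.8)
The plan is to combine Lemma~\ref{lem:lotz_g1} with an accounting argument analogous to the one used for SEMO in Lemma~\ref{lem:semlotz}, adapting the single-bit mutation probabilities to the standard bit mutation of GSEMO. First I would invoke Lemma~\ref{lem:lotz_g1} for the $O(n^2)$ expected time until the population first reaches the Pareto front. After this point every Pareto-optimal point attains the maximal value $\LD{x}=n$, so by construction the modified GSEMO restricts the set $P'$ of eligible parents to points of maximal $\ld$-value, which are then exactly the Pareto-optimal points in the population. Hence from now on every selected parent is Pareto-optimal, and the notion of a \emph{relevant step}---selecting a \good parent on the Pareto front---carries over unchanged.

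Next I would bound the number of relevant steps by the accounting method, charging each relevant step to the number $i$ of leading ones of the selected parent $1^i0^{n-i}$. Exactly as in Lemma~\ref{lem:semlotz}, such a parent is \good precisely while at least one neighbouring Pareto point $1^{i-1}0^{n-i+1}$ or $1^{i+1}0^{n-i-1}$ is still missing from $f(P)$, so relevant steps at $i$ stop once both gaps are filled. The only quantitative change from SEMO is the per-step success probability: flipping exactly the $i$-th bit of $1^i0^{n-i}$ produces $1^{i-1}0^{n-i+1}$ with probability $\frac{1}{n}\left(1-\frac{1}{n}\right)^{n-1}\ge\frac{1}{en}$, and flipping exactly the $(i+1)$-th bit produces $1^{i+1}0^{n-i-1}$ with probability at least $\frac{1}{en}$. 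Thus the expected number of relevant steps at $i$ needed to close both gaps is at most $en+en=2en$, and summing over $0\le i\le n$ gives at most $2en(n+1)=O(n^2)$ relevant steps overall. Since each generation is a relevant step with probability at least $\pgood$, the expected waiting time per relevant step is $1/\pgood$, so the expected number of generations spent on relevant steps is $O(n^2/\pgood)$; adding the $O(n^2)$ from Lemma~\ref{lem:lotz_g1} yields the claimed $O(n^2/\pgood)$.

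The hard part, and the reason this modified variant was introduced in the first place, is that standard bit mutation can insert search points \emph{off} the Pareto front into the population even after the front has been reached, and I must check that the accounting above is not disturbed by them. Two observations settle this. First, any such off-front point $y$ satisfies $\LD{y}<n$, so it never enters $P'$ and can therefore never be chosen as a parent nor generate a relevant step; this is precisely the leverage provided by the $\ld$-dominant attribute. Second, I would verify that once the Pareto point with objective $(i+1,n-i-1)$ enters the population it can never be evicted: any $y$ weakly dominating it would need $\LO{y}\ge i+1$ and $\TZ{y}\ge n-i-1$, forcing $\LD{y}=n$ and hence $y=1^{i+1}0^{n-i-1}$ itself, so the gap, once closed by the correct Pareto point, stays closed. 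Any off-front individual sharing the same first-objective value is strictly dominated by this newly created Pareto point and is simply removed, so it neither blocks the creation of the missing point nor invalidates the gap-counting. These two facts guarantee that the SEMO-style accounting transfers verbatim to the modified GSEMO.
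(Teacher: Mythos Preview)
Your proposal is correct and follows essentially the same approach as the paper: invoke Lemma~\ref{lem:lotz_g1} for the approach phase, then apply the accounting argument from Lemma~\ref{lem:semlotz} with the extra $1/e$ factor for standard bit mutation, summing $2en$ over all $i$ to get $O(n^2)$ relevant steps and hence $O(n^2/\pgood)$ generations. Your explicit verification that off-front points never enter $P'$ (since $\LD{y}<n$) and that Pareto points, once inserted, are never evicted is a welcome addition that the paper leaves implicit.
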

\begin{proof}
As for SEMO, before the population covers the whole Pareto front, the optimisation process of the modified GSEMO can be divided into two stages. The first stage focusses on obtaining the first individual on the Pareto front and the second one focusses on covering the Pareto front. As proved in Lemma~\ref{lem:lotz_g1}, the expected time for the modified GSEMO to reach the Pareto front is at most $O(n^2)$.

In the second stage, by following the definition of relevant step, the parent to be selected is a \good search point on the Pareto front with the maximum $\LD{x}$ dominant attribute. The algorithm will select individuals on the Pareto front with the maximum $\LD{x}$ dominant attribute according to their diversity contribution. So now we can apply the accounting method used to prove previous lemmas to bound the number of relevant steps spent selecting the \good parent.

As in Lemma~\ref{lem:semlotz}, we define a \good parent with $i$ leading ones with possible gaps on $i-1$ and/or $i+1$ across all $0\leq i\leq n$. And by introducing the factor $1/e$ to the analysis in Lemma~\ref{lem:semlotz}, we now have the time for filling both gaps is at most $1/(en) + 1/(en)$. Hence there are at most $en + en = 2en$ relevant steps selecting a \good parent with $i$ leading ones. Summing over all $i$, the expected total number of relevant steps is hence at most
\[
\sum_{i=0}^{n}2en=2e\sum_{i=0}^{n}n=O(n^2)
\]
The overall runtime for the modified GSEMO on \lotz to reach a population covering the whole Pareto front is bounded above by $O(n^2/\pgood)$.
\end{proof}

As mentioned on the proof of the previous lemma, once the individual with the maximum $\LD{x}$ dominant attribute has reached the Pareto front, the algorithm will always select \good individuals on the Pareto front (with the maximum $\LD{x}$ dominant attribute) according to their diversity contribution. This characteristic allows us to apply Lemma~\ref{lem:parent-selection-property-for-good-individual}, and by Lemma~\ref{lem:mgsemolotz}, we now have proved the following results.

\begin{theorem}
Consider the modified GSEMO with diversity-based parent selection using any diversity measure that is diversity-favouring on $\{0, 1\}^n \setminus \{0^n, 1^n\}$ (e.\,g.\ HVC or CDC). Then
the expected time to find the whole Pareto front on \lotz is bounded by $O(n^2)$ for the exponential and power law ranking schemes, and for tournament selection with tournament size~$\mu$. It is bounded by $O(n^2 \log n)$ for the harmonic ranking scheme.
\end{theorem}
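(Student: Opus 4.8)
The plan is to derive this theorem directly by combining Lemma~\ref{lem:mgsemolotz}, which bounds the expected runtime of the modified GSEMO by $O(n^2/\pgood)$, with Lemma~\ref{lem:parent-selection-property-for-good-individual}, which supplies a lower bound on the probability $\pgood$ of selecting a \good individual for each ranking scheme. All of the quantitative combinatorial work has already been discharged by the feeding lemmas, so what remains is to verify that the hypotheses of Lemma~\ref{lem:parent-selection-property-for-good-individual} genuinely hold in this setting and then to substitute the four $\pgood$ values.

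First I would confirm applicability of Lemma~\ref{lem:parent-selection-property-for-good-individual}. That lemma assumes the parent is drawn from a population contained in the Pareto set $\paretoset$ according to a diversity-favouring measure. For the modified GSEMO this is exactly the situation once the Pareto front has been reached: every Pareto-optimal search point $1^i 0^{n-i}$ has the maximal value $\LD{x} = n$, so the restricted set $P'$ on which the diversity contribution is computed, namely the search points with maximum $\ld$-dominant attribute, coincides with the Pareto-optimal individuals of the population and hence satisfies $P' \subseteq \paretoset$. Because HVC and CDC are diversity-favouring on $\{0,1\}^n \setminus \{0^n, 1^n\}$ by Lemmas~\ref{lem:hypergood} and~\ref{lem:cdcgood}, Lemma~\ref{lem:r1-r2-r3} applies to $P'$ and Lemma~\ref{lem:parent-selection-property-for-good-individual} yields the stated $\pgood$ bounds.

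Next I would substitute. For the exponential and power law ranking schemes, and for tournament selection with tournament size $\mu$, Lemma~\ref{lem:parent-selection-property-for-good-individual} gives $\pgood = \Omega(1)$, so the bound $O(n^2/\pgood)$ from Lemma~\ref{lem:mgsemolotz} becomes $O(n^2)$. For the harmonic scheme, $\pgood = \Omega(1/\log \mu)$; since the population on \lotz never exceeds $n+1$ individuals we have $\mu \le n+1$ and therefore $\pgood = \Omega(1/\log n)$, giving $O(n^2 \log n)$.

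I do not expect any serious obstacle, since the two feeding lemmas already contain all the quantitative content, including the accounting-method argument of Lemma~\ref{lem:mgsemolotz} and the reaching-the-front bound of Lemma~\ref{lem:lotz_g1}. The only point requiring care is the one highlighted above: ensuring that after the front is reached the $\ld$-dominant restriction forces parent selection onto Pareto-optimal points, so that the diversity-favouring property (which excludes $0^n$ and $1^n$ but otherwise ranks \good individuals above \bad ones) can legitimately be invoked on $P'$. Once this is in hand the result follows immediately by the substitution above.
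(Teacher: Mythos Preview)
Your proposal is correct and follows the same approach as the paper: the theorem is obtained by combining Lemma~\ref{lem:mgsemolotz} with Lemma~\ref{lem:parent-selection-property-for-good-individual}, after noting that once the front is reached the $\ld$-dominant restriction confines parent selection to Pareto-optimal points so that the diversity-favouring hypothesis applies. Your justification of this applicability step is, if anything, slightly more explicit than the paper's.
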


\section{Experiments}
\label{sec:exp}

The experimental approach is focused on the analysis of SEMO, GSEMO and the modified GSEMO and their performance with and without the diversity-based parent selection mechanisms. We are interested in observing if we can speed up the performance from the classical approaches. For the case of the modified GSEMO, we measure its performance only on \lotz and we compare its performance to GSEMO in order to observe the impact of the $\ld$-dominant attribute on the performance of the algorithm.

Experiments also allow for a more detailed comparison of the HVC, CDC, and the parent selection methods. In the case of the HVC, we have defined two settings for the reference points, $(-1,-1)$ and $(-n,-n)$. For the first reference point, a slight preference to the extreme points is provided while with the second, the influence of the extreme points becomes very strong. This particular characteristic became an interesting feature to observe in the case of the ranking-based selection schemes, and exposes a potential flaw for the case of HVC with low (or high in the case of minimisation) reference point or CDC (since it assigns infinite value to the extreme points) and the parent selection mechanisms that focus very aggressively toward the extreme points, as we shall see below.

Since we are interested in the time required to find the Pareto front, we report the following outcomes and stopping criteria for each run. \emph{Success}, the whole Pareto front has been covered, \ie the run is stopped if the population contains all individuals on the Pareto front. \emph{Failure/Stagnation}, once the run has reached 1 million generations and the Pareto front has not been fully covered, this is enough time for the algorithms to create new individuals and fill the gaps on the Pareto front. We repeat the experimental framework for $100$ runs with problem size $n=100$ for all algorithmic approaches and report the mean and standard deviation (STD) as our metrics of interest.

Table~\ref{tab:expsalg} shows the mean and STD of generations required to find the Pareto front for the classic SEMO and GSEMO that use uniform parent selection for both test functions. Table~\ref{tab:expomm} and \ref{tab:explotz} refer to the mean and standard deviation of generations required to find the Pareto front for SEMO and GSEMO with the different diversity-based parent selection schemes for \omm and \lotz, respectively. Finally, Table~\ref{tab:explranlotz} shows the mean and standard deviation of generations required to find the Pareto front for the modified GSEMO on \lotz.

\begin{table*}
\caption{Mean (first rows) and STD (second rows) of generations required to find the Pareto front for SEMO and GSEMO on \omm and \lotz with $n=100$.}
  \label{tab:expsalg}
  \centering
\begin{tabular}{lcc}
\toprule
    Algorithms & \omm & \lotz \\
   \midrule
    \mrow{2}{*}{SEMO} & $\expnumber{4.16}{04}$ & $\expnumber{3.17}{05}$ \\
    & $\expnumber{1.15}{04}$ & $\expnumber{5.34}{04}$ \\
    \midrule
    \mrow{2}{*}{GSEMO} & $\expnumber{1.06}{05}$ & $\expnumber{6.58}{05}$ \\
    & $\expnumber{3.47}{04}$ & $\expnumber{1.12}{05}$ \\
    \bottomrule
\end{tabular}
\end{table*}

As we mentioned before, a parent selection mechanisms that is extremely focused on the extreme points can be potentially dangerous, and to exemplify this, we have introduced a deterministic selection mechanism which we have named \emph{Highest Diversity Contribution} (HDC): always select an individual with the highest diversity contribution (break ties uniformly at random if there are several such points). We also have defined a modified version of the uniform random selection used by SEMO and GSEMO, that we call \emph{Non-Minimum Uniform at Random} (NMUAR), where the individuals with the minimum diversity contribution in the population are ignored (provided that the population does contain multiple diversity contribution values) and one individual is selected uniformly at random from all remaining individuals. In this sense individuals with high diversity contributions have better probabilities to be selected and the approach is flexible enough to choose between extreme and intermediate individuals.

As it can be observed in Table~\ref{tab:expomm} and \ref{tab:explotz}, HDC fails to find the Pareto front for \omm and \lotz in the case of GSEMO for both diversity-based metrics. For the case of GSEMO with HDC selection mechanism with HVC and CDC on \omm the failure rate was $0.94$ and $0.93$, respectively. On \lotz, the failure rate was $1.0$ for both diversity metrics.

\begin{table*}
  \caption{Mean (first rows) and STD (second rows) of generations required to find the Pareto front for SEMO and GSEMO with diversity-based parent selection methods on \omm with $n=100$. ``Stagnation'' indicates a failure rate larger than 0.}
  \label{tab:expomm}
  \centering
  \begin{tabular}{lccc}
    \toprule
    {\bf Algorithms} & \boldmath$\HVC{-1}{-1}$ & \boldmath$\HVC{-n}{-n}$ & \boldmath$\cdc$ \\
    \midrule
    \mrow{2}{*}{SEMO \& HDC} & $\expnumber{9.14}{02}$ & $\expnumber{8.90}{02}$ & $\expnumber{1.05}{03}$ \\
    & $\expnumber{1.76}{02}$ & $\expnumber{1.65}{02}$ & $\expnumber{2.40}{02}$ \\
    \midrule
     \mrow{2}{*}{GSEMO \& HDC} & $\expnumber{2.12}{03}$ & Stagnation & Stagnation \\
     & $\expnumber{4.28}{02}$ & Stagnation & Stagnation \\
     \midrule
     \mrow{2}{*}{SEMO \& NMUAR} & $\expnumber{8.92}{02}$ & $\expnumber{1.05}{03}$ & $\expnumber{1.03}{03}$ \\
     & $\expnumber{1.81}{02}$ & $\expnumber{2.72}{02}$ & $\expnumber{2.59}{02}$ \\
     \midrule
     \mrow{2}{*}{GSEMO \& NMUAR} & $\expnumber{2.14}{03}$ & $\expnumber{2.54}{03}$ & $\expnumber{2.58}{03}$ \\
     & $\expnumber{4.97}{02}$ & $\expnumber{6.57}{02}$ & $\expnumber{7.86}{02}$ \\
     \midrule
 	\mrow{2}{*}{SEMO \& exponential} & $\expnumber{1.28}{03}$ & $\expnumber{1.27}{03}$ & $\expnumber{1.36}{03}$ \\
     & $\expnumber{2.72}{02}$ & $\expnumber{2.71}{02}$ & $\expnumber{3.44}{02}$ \\
     \midrule
     \mrow{2}{*}{GSEMO \& exponential} & $\expnumber{3.21}{03}$ & $\expnumber{3.18}{03}$ & $\expnumber{3.24}{03}$ \\
     & $\expnumber{9.35}{02}$ & $\expnumber{9.12}{02}$ & $\expnumber{7.72}{02}$ \\
     \midrule
 	\mrow{2}{*}{SEMO \& harmonic} & $\expnumber{3.05}{03}$ & $\expnumber{3.24}{03}$ & $\expnumber{3.28}{03}$ \\
     & $\expnumber{6.97}{02}$ & $\expnumber{8.63}{02}$ & $\expnumber{8.03}{02}$ \\
     \midrule
     \mrow{2}{*}{GSEMO \& harmonic} & $\expnumber{7.89}{03}$ & $\expnumber{7.26}{03}$ & $\expnumber{8.03}{03}$ \\
     & $\expnumber{1.90}{03}$ & $\expnumber{1.69}{03}$ & $\expnumber{2.09}{03}$ \\
     \midrule
 	\mrow{2}{*}{SEMO \& power law} & $\expnumber{1.15}{03}$ & $\expnumber{1.24}{03}$ & $\expnumber{1.34}{03}$ \\
     & $\expnumber{2.48}{02}$ & $\expnumber{2.89}{02}$ & $\expnumber{3.00}{02}$ \\
     \midrule
     \mrow{2}{*}{GSEMO \& power law} & $\expnumber{2.87}{03}$ & $\expnumber{2.85}{03}$ & $\expnumber{3.32}{03}$ \\
     & $\expnumber{6.35}{02}$ & $\expnumber{6.22}{02}$ & $\expnumber{1.07}{03}$ \\
     \midrule
 	\mrow{2}{*}{SEMO \& tournament($\mu$)}\!\!\!\!\!\!\!\!\!\! & $\expnumber{1.05}{03}$ & $\expnumber{1.08}{03}$ & $\expnumber{1.21}{03}$ \\
     & $\expnumber{2.24}{02}$ & $\expnumber{2.18}{02}$ & $\expnumber{3.09}{02}$ \\
     \midrule
     \mrow{2}{*}{GSEMO \& tournament($\mu$)}\!\!\!\!\!\!\!\!\!\! & $\expnumber{2.58}{03}$ & $\expnumber{2.60}{03}$ & $\expnumber{2.81}{03}$ \\
     & $\expnumber{5.48}{02}$ & $\expnumber{7.91}{02}$ & $\expnumber{7.34}{02}$ \\
    \bottomrule
  \end{tabular}
\end{table*}

\begin{table*}
  \caption{Mean (first rows) and STD (second rows) of generations required to find the Pareto front for SEMO and GSEMO with diversity-based parent selection methods on \lotz with $n=100$. ``Stagnation'' indicates a failure rate larger than 0.}
  \label{tab:explotz}
  \centering
  \begin{tabular}{lccc}
    \toprule
    {\bf Algorithms} & \boldmath$\HVC{-1}{-1}$ & \boldmath$\HVC{-n}{-n}$ & \boldmath$\cdc$ \\
     \midrule
     \mrow{2}{*}{SEMO \& HDC} & $\expnumber{1.24}{04}$ & $\expnumber{1.25}{04}$ & $\expnumber{1.41}{04}$ \\
     & $\expnumber{9.79}{02}$ & $\expnumber{1.22}{03}$ & $\expnumber{1.80}{03}$ \\
     \midrule
     \mrow{2}{*}{GSEMO \& HDC} & $\expnumber{3.06}{04}$ & Stagnation & Stagnation \\
     & $\expnumber{2.62}{03}$ & Stagnation & Stagnation \\
     \midrule
     \mrow{2}{*}{SEMO \& NMUAR} & $\expnumber{1.25}{04}$ & $\expnumber{1.38}{04}$ & $\expnumber{1.41}{04}$ \\
     & $\expnumber{1.10}{03}$ & $\expnumber{1.49}{03}$ & $\expnumber{1.52}{03}$ \\
     \midrule
     \mrow{2}{*}{GSEMO \& NMUAR} & $\expnumber{3.17}{04}$ & $\expnumber{3.50}{04}$ & $\expnumber{3.58}{04}$ \\
     & $\expnumber{3.13}{03}$ & $\expnumber{3.85}{03}$ & $\expnumber{3.75}{03}$ \\
     \midrule
 	\mrow{2}{*}{SEMO \& exponential} & $\expnumber{1.57}{04}$ & $\expnumber{1.58}{04}$ & $\expnumber{1.78}{04}$ \\
     & $\expnumber{1.31}{03}$ & $\expnumber{1.33}{03}$ & $\expnumber{2.47}{03}$ \\
     \midrule
     \mrow{2}{*}{GSEMO \& exponential} & $\expnumber{3.45}{04}$ & $\expnumber{4.00}{04}$ & $\expnumber{5.87}{04}$ \\
     & $\expnumber{2.87}{03}$ & $\expnumber{8.60}{03}$ & $\expnumber{1.63}{04}$ \\
     \midrule
 	\mrow{2}{*}{SEMO \& harmonic} & $\expnumber{3.14}{04}$ & $\expnumber{3.08}{04}$ & $\expnumber{3.53}{04}$ \\
     & $\expnumber{3.60}{03}$ & $\expnumber{3.24}{03}$ & $\expnumber{5.68}{03}$ \\
     \midrule
     \mrow{2}{*}{GSEMO \& harmonic} & $\expnumber{6.69}{04}$ & $\expnumber{6.33}{04}$ & $\expnumber{6.73}{04}$ \\
     & $\expnumber{7.23}{03}$ & $\expnumber{7.40}{03}$ & $\expnumber{1.02}{04}$ \\
     \midrule
 	\mrow{2}{*}{SEMO \& power law} & $\expnumber{1.54}{04}$ & $\expnumber{1.51}{04}$ & $\expnumber{1.69}{04}$ \\
     & $\expnumber{1.26}{03}$ & $\expnumber{1.36}{03}$ & $\expnumber{2.13}{03}$ \\
     \midrule
     \mrow{2}{*}{GSEMO \& power law} & $\expnumber{3.40}{04}$ & $\expnumber{5.03}{04}$ & $\expnumber{5.73}{04}$ \\
     & $\expnumber{3.30}{03}$ & $\expnumber{1.24}{04}$ & $\expnumber{1.43}{04}$ \\
     \midrule
 	\mrow{2}{*}{SEMO \& tournament($\mu$)\!\!\!\!\!\!\!\!\!\!} & $\expnumber{1.38}{04}$ & $\expnumber{1.41}{04}$ & $\expnumber{1.55}{04}$ \\
     & $\expnumber{1.25}{03}$ & $\expnumber{1.12}{03}$ & $\expnumber{1.94}{03}$ \\
     \midrule
     \mrow{2}{*}{GSEMO \& tournament($\mu$)\!\!\!\!\!\!\!\!\!\!} & $\expnumber{3.16}{04}$ & $\expnumber{6.53}{04}$ & $\expnumber{7.87}{04}$ \\
     & $\expnumber{2.88}{03}$ & $\expnumber{2.15}{04}$ & $\expnumber{2.57}{04}$ \\
    \bottomrule
  \end{tabular}
\end{table*}

The reason for these bad results for GSEMO (and the modified GSEMO) on \omm is due to the mutation operator. Both algorithms can create gaps by creating an offspring that may differ from its parent with more than one bit. In the case of GSEMO on \lotz, the algorithm can create incomparable search points and the population expands in size while approaching the Pareto front. This implies that the Pareto front is reached in different areas at different times during the run, leaving intermediate unexplored regions. Once the Pareto front has been reached, the algorithm can create gaps by creating an offspring by flipping more than one leading one or trailing zero. Then it will continue selecting those individuals ignoring the intermediate ones, leaving the population in a \emph{stagnation} state. This observation also justifies why we introduced parent selection schemes of varying degree of aggressiveness. We analyse this process rigorously in Section~\ref{sec:greed}.

For all other parent selection schemes defined in this paper, we have achieved a significant speed up in the performance of SEMO and GSEMO of around one order of magnitude. As it can be observed in Table~\ref{tab:expomm} and \ref{tab:explotz}, SEMO and GSEMO with diversity-based parent selection mechanisms are able to find the Pareto front faster than its classical counterparts, \ie fewer generations are required for both test functions. Note that the problem size $n=100$ is relatively moderate; as our theoretical results prove, speedups over the original algorithms will grow further when the problem size is increased.

In the case of the modified GSEMO, the same stagnation state was reached (see Table~\ref{tab:explranlotz}). For the modified GSEMO with HDC selection mechanism with HVC and CDC on \omm the failure rate was $0.97$ and $1.0$, respectively. On \lotz the failure rate for the modified GSEMO with HVC and CDC decreases considerably, reaching $0.37$ and $0.33$, respectively. 

\begin{table*}
  \caption{Mean (first rows) and STD (second rows) of generations required to find the Pareto front for the modified GSEMO and diversity-based parent selection methods on \lotz with $n=100$. ``Stagnation'' indicates a failure rate larger than 0.}
  \label{tab:explranlotz}
  \centering
  \begin{tabular}{lccc}
   \toprule
    {\bf Algorithms} & \boldmath$\HVC{-1}{-1}$ & \boldmath$\HVC{-n}{-n}$ & \boldmath$\cdc$ \\
     \midrule
     \mrow{2}{*}{HDC} & $\expnumber{3.06}{04}$ & Stagnation & Stagnation \\
     & $\expnumber{2.78}{03}$ & Stagnation & Stagnation \\
     \midrule
     \mrow{2}{*}{NMUAR} & $\expnumber{3.19}{04}$ & $\expnumber{3.60}{04}$ & $\expnumber{3.55}{04}$ \\
     & $\expnumber{2.92}{03}$ & $\expnumber{4.50}{03}$ & $\expnumber{4.92}{03}$ \\
     \midrule
 	 \mrow{2}{*}{Exponential} & $\expnumber{3.95}{04}$ & $\expnumber{3.99}{04}$ & $\expnumber{4.55}{04}$ \\
     & $\expnumber{3.65}{03}$ & $\expnumber{3.62}{03}$ & $\expnumber{6.00}{03}$ \\
     \midrule
     \mrow{2}{*}{Harmonic} & $\expnumber{8.13}{04}$ & $\expnumber{8.11}{04}$ & $\expnumber{9.49}{04}$ \\
     & $\expnumber{8.37}{03}$ & $\expnumber{8.53}{03}$ & $\expnumber{1.50}{04}$ \\
     \midrule
     \mrow{2}{*}{Power law} & $\expnumber{3.81}{04}$ & $\expnumber{3.81}{04}$ & $\expnumber{4.32}{04}$ \\
     & $\expnumber{3.62}{03}$ & $\expnumber{3.66}{03}$ & $\expnumber{5.75}{03}$ \\
     \midrule
     \mrow{2}{*}{Tournament($\mu$)\!\!\!\!\!\!\!\!\!\!} & $\expnumber{3.46}{04}$ & $\expnumber{3.49}{04}$ & $\expnumber{3.88}{04}$ \\
     & $\expnumber{2.95}{03}$ & $\expnumber{3.42}{03}$ & $\expnumber{5.50}{03}$ \\
    \bottomrule
  \end{tabular}
\end{table*}

The modified GSEMO on \lotz achieved a considerably lower failure rate compared to the original GSEMO, where it was 1.0. 
We believe that there are two reasons for this. Firstly, for the modified GSEMO it is not possible to reach the Pareto front in different areas, avoiding the creation of gaps while approaching the Pareto front; the individual with the largest $\ld$-dominant attribute will always reach the Pareto front. Secondly, after the Pareto front has been reached, the algorithm will select individuals on the Pareto front as parents according to their diversity contribution. Here, from the $i$ individual, the mutation operator needs to flip $1^{i-1}0^{n-i+1}$ or $1^{i+1}0^{n-i-1}$ to create a new individual. In this sense it is more difficult to leave an empty space between points leading to this better performance but it is always possible for the algorithm to flip multiple consecutive bits to create a gap, resulting in the mentioned failure rates.

The modified GSEMO can also achieve a significant speed up in performance on \lotz. With this preliminary analysis we can see that the introduction of the $\ld$-dominant attribute does not drastically change the average runtime and it can be used as an approximation or first step towards the definition of a bound for GSEMO with diversity-based parent selection on \lotz.

\section{Comparing Selection Schemes: How Much Greed is Good?}
\label{sec:discgreed}

In this section we focus our attention on the Highest Diversity Contribution (HDC) and the Non-Minimum Uniformly at Random (NMUAR) methods. In Section~\ref{sec:greed} we discuss in detail how HDC seems to be the fastest selection mechanism for SEMO, but the worst for GSEMO as it leads to stagnation. We show by means of rigorous runtime analysis how this is a rare and natural example where multi-bit flips do a lot of harm by leading the population into a stagnation state.

Finally in Section~\ref{sec:disnmuars} we discuss the results obtained regarding the NMUAR mechanism. As shown in Section~\ref{sec:exp}, NMUAR performs experimentally well for SEMO and GSEMO with no stagnation outcome. We show that for a particular choice of the reference point  NMUAR can lead the population into a stagnation state. 
On the positive side, we show that NMUAR is able to efficiently optimise both \lotz and \omm for common choices of the reference point.

\subsection{Why Highest Diversity Contribution Stagnates}
\label{sec:greed}

In this section we theoretically examine the stagnation results of Section~\ref{sec:exp} related to GSEMO with the HDC selection strongly favouring the extreme points. As it can be observed from Tables~\ref{tab:expomm} and \ref{tab:explotz}, a greedy approach seems to be the best for SEMO. SEMO can find all individuals on the Pareto front but also is the fastest in doing so. This is because for SEMO on \omm, all individuals are part of the Pareto front and the algorithm starts with one individual on the Pareto front. In the case of \lotz the algorithm always reaches the Pareto front with just one individual. Once on the Pareto front, the spread of the population to outer areas can only be achieved by individuals that differ from its parent in just one bit, \ie no gaps or empty spaces are left between points. 

In the following we show by means of rigorous runtime analysis why the previous experimental results occur for the modified GSEMO on \lotz. Let the reference point be dominated by $(-n^2, -n^2)$ for the HVC in order to simplify the analysis for proving that focusing on extreme points can lead to undesired results. Our main result of this section is the following.

\begin{theorem}
\label{the:probability-of-gaps}
Consider the modified GSEMO with Highest Diversity Contribution, choosing as diversity metric either CDC or HVC with a reference point dominated by $(-n^2, -n^2)$ on the function \lotz. Then at the first point in time the population $P_t$ contains both $0^n$ and $1^n$, $P_t$ equals the whole front with probability $\Omega(1)$ and $1-\Omega(1)$. The expected time to find the whole Pareto front is $n^{\Omega(n)}$.
\end{theorem}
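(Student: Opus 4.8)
The plan is to reduce everything to one structural event and to first pin down the deterministic behaviour. I would begin by showing that, once the population contains any Pareto-optimal point, the modified GSEMO only ever selects the current leftmost or rightmost front point. Such a point has the maximal attribute $\LD{x}=n$, so the restricted set $P'$ is exactly the set of front points currently in $P$; and among these HDC always picks an extreme of the front: for \cdc the two boundary points receive infinite crowding distance, while for \hvc with reference point $r$ dominated by $(-n^2,-n^2)$ the leftmost point has contribution at least $n^2$ because the factor $\f{1}{x}-r_1\ge n^2$ (symmetrically on the right), whereas an interior front point at index $i$ whose present neighbours lie at index-distances $a,b$ has contribution $a\cdot b$ with $a+b\le n$, hence $<n^2$. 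I would then note that on \lotz neither $0^n$ nor $1^n$ is ever removed (a point weakly dominating $0^n$ must equal it), so the leftmost index is non-increasing, the rightmost non-decreasing, and once both extremes are present every parent is $0^n$ or $1^n$ forever.

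Call a missing front index $g$ with $n/4\le g\le 3n/4$ a \emph{central gap}. From the structure above, once both extremes are present, creating $1^g0^{n-g}$ requires flipping exactly the first $g$ bits of $0^n$ or the last $n-g$ bits of $1^n$, i.e.\ at least $n/4$ prescribed bits, so one step fills a given central gap with probability at most $n^{-n/4}$. Hence, conditioned on a central gap being present at the first time $t^*$ that both extremes occur, the expected remaining time to cover the front is at least $n^{n/4}/2$. Writing $T$ for the time to cover the whole front, the theorem thus reduces to two probabilistic claims about the configuration at time $t^*$: it contains a central gap with probability $\Omega(1)$ (being then incomplete, this also yields the ``$1-\Omega(1)$'' part), and it equals the whole front with probability $\Omega(1)$; the bound $\E{T}=n^{\Omega(n)}$ then follows from $\E{T}\ge\Omega(1)\cdot n^{n/4}/2$.

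To establish these two probabilities I would analyse the spreading of the frontier after the first Pareto point appears (expected time $O(n^2)$ by Lemma~\ref{lem:lotz_g1}). Whenever the leftmost point $1^{L}0^{n-L}$ is selected and produces a new front index below $L$, the jump has size $j$ with probability decreasing geometrically in $j$ with ratio $\Theta(1/n)$; so, conditioned on extending, the jump is $1$ with probability $1-\Theta(1/n)$ and $\ge 2$ otherwise, and symmetrically on the right. Since the leftmost index must reach $0$ and the rightmost $n$, a case split on whether the first front index $k$ is $\le n/2$ or $>n/2$ shows that at least one frontier must cross a central interval of length $\ge n/4$. Crossing it by single-bit steps alone has probability $(1-\Theta(1/n))^{n/4}=\Omega(1)$, and the event that \emph{all} of the $\Theta(n)$ extensions over the run are single-bit has probability $\Omega(1)$ as well, giving a complete front at $t^*$ with probability $\Omega(1)$; conversely any jump of size $\ge 2$ made while the frontier is central leaves a central gap. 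Such a gap survives with constant probability, because it is created one index above the advanced frontier, so its fate is a race between a single-bit back-flip that heals it and a single-bit step that advances the frontier past it (each $\Theta(1/n)$), after which it can only be refilled by an ever-larger multi-bit flip. Counting over the $\Omega(n)$ central extensions then yields a surviving central gap with probability $\Omega(1)$.

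The main obstacle is exactly this last, probabilistic part: making the survival-of-a-gap argument rigorous uniformly in the random first-point location $k$, and cleanly decoupling the frontier-extending events (whose jump law drives both gap creation and the healing race) from the many intervening non-extension steps and from the independent activity of the other frontier. I would handle this by conditioning on the subsequence of frontier-extending events alone and arguing that their jump sizes are independent with the stated law. Two routine loose ends remain. First, off-front points never interfere: they lie outside $P'$, are never selected, never affect the \hvc/\cdc values of front points, and can never dominate a front index (any dominator of $1^g0^{n-g}$ would have $\ld$-value at least $n$). Second, reaching the state ``both extremes present'' in finite expected time follows because each frontier advances by a single bit with probability $\Theta(1/n)$ whenever selected, independently of any gaps, so the dominant contribution to $\E{T}$ is the post-$t^*$ filling of the central gap analysed above.
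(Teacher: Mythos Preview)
Your proposal is correct and follows essentially the same route as the paper: Lemma~\ref{lem:hdcsel} is exactly your deterministic claim that HDC always selects a current extreme front point, and Lemma~\ref{lem:gaps} establishes the two $\Omega(1)$ probabilities (gap creation via the $\Theta(1/n)$ chance of a jump $\ge 2$ in each of $\Theta(n)$ central extensions, and gap survival) that you identify as the core probabilistic content. The paper's proof of the ``whole front at time $t^*$'' part is also precisely your observation that every extension is a single-bit step with conditional probability $1-O(1/n)$, multiplied over $n$ extensions.

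The one place where the paper is more explicit than your sketch is the gap-survival step you flag as the main obstacle. Rather than phrasing it as a race at the adjacent index and then appealing informally to ``ever-larger multi-bit flips'', the paper bounds, for every index $i\neq g$ that could ever be the selected extreme, the conditional probability of \emph{not} filling the gap from $i$ before the frontier moves past $i$ by $p_1/(p_1+p_{|g-i|})=1/(1+(n-1)^{1-|g-i|})$, and then takes the product over all $i$ (using that mutations at distinct frontier positions are fresh independent draws). This product is $\frac{1}{4}\prod_{d\ge 2}(1-(n-1)^{1-d})^2=\Omega(1)$, which is the rigorous version of your race intuition and cleanly sidesteps the decoupling issues you mention.
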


The remainder of this subsection is devoted to the proof of Theorem~\ref{the:probability-of-gaps}. First, we define what a gap means and transition probabilities for mutations on the Pareto front that will be used in the remainder of this section. 

\begin{definition}[Gap]
\label{def:gap}
We say that a population $P_t$ has a gap at position~$i$ if $1^i 0^{n-i} \notin P_t$, but $1^j 0^{n-j} \in P_t$ and $1^k 0^{n-k} \in P_t$ for $j < i < k$.
\end{definition}

\begin{definition}[Transition probabilities]
\label{def:tranpro}
We define 
\[
p_k = n^{-k} \cdot \left(1-\frac{1}{n}\right)^{n-k} = \left(1-\frac{1}{n}\right)^n \cdot (n-1)^{-k}.
\]
as the probability of jumping from any search point $1^i 0^{n-i}$ to $1^{i+k}0^{n-i-k}$  and $1^{i-k} 0^{n-i+k}$ (if existent).
\end{definition}

Next, we show that, once the Pareto front has been reached, the Highest Diversity Contribution selection will always choose a parent $x$ with an extreme number of ones. The following lemma applies to a population~$P$ containing only search points on the Pareto front. This setting applies for the modified GSEMO once the Pareto front has been reached as then parent selection is only based on search points with a maximum $\ld$-dominant attribute, corresponding to points on the Pareto front.

\begin{lemma}
\label{lem:hdcsel}
Consider the Highest Diversity Contribution (HDC) selection mechanism, choosing as diversity metric either $\HVC{x}{P}$ with reference point dominated by $(-n^2, -n^2)$ or $\CDC{x}{P}$ on the function \lotz, for a population~$P$ containing only search points on the Pareto front. Then the parent chosen by HDC will always either have a minimum or a maximum number of ones among all search points in~$P$.
\end{lemma}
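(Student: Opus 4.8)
The plan is to compute the two diversity metrics explicitly on a population lying entirely on the Pareto front of \lotz and to show that the extreme points strictly dominate every interior point in contribution. Recall that the Pareto-optimal points of \lotz are exactly the strings $1^i 0^{n-i}$, whose objective vector is $(\f{1}{x},\f{2}{x}) = (i, n-i)$; in particular $\f{1}{x} = \ones{x}$ and $\f{1}{x}+\f{2}{x}=n$ hold throughout the front. I would sort $P$ as $x_1,\dots,x_\mu$ by increasing number of ones, so that $x_1$ has the minimum and $x_\mu$ the maximum number of ones, and then apply the contribution formulas from Definition~\ref{def:hypcon} and Algorithm~\ref{alg:crowdist}. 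If $\mu \le 2$ every point of $P$ is already extreme and there is nothing to prove, so I assume $\mu \ge 3$. Recall also that a reference point dominated by $(-n^2,-n^2)$ means $r_1 \le -n^2$ and $r_2 \le -n^2$.

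For the HVC case I would bound the two types of contribution separately. For the extreme points the decisive factor is the distance to the reference point: since $\f{1}{x_1}\ge 0$ and $r_1 \le -n^2$, the left factor of $\HVC{x_1}{P}$ is at least $n^2$, and since $\f{2}{x_\mu}\ge 0$ and $r_2 \le -n^2$, the corresponding factor of $\HVC{x_\mu}{P}$ is at least $n^2$; as the remaining factor is a difference of distinct integer objective values and hence at least $1$, both extreme contributions are at least $n^2$. For an interior point $x_i$ the contribution equals the product $(\ones{x_i}-\ones{x_{i-1}})\cdot(\ones{x_{i+1}}-\ones{x_i})$ of the left and right gaps, using $\f{2}{x}=n-\ones{x}$ on the front. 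These two non-negative gaps sum to $\ones{x_{i+1}}-\ones{x_{i-1}} \le n$, so by the inequality of arithmetic and geometric means their product is at most $n^2/4$. Hence every interior contribution is strictly below $n^2$, and HDC must select one of $x_1, x_\mu$.

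For the CDC case the argument is even simpler. By Algorithm~\ref{alg:crowdist} the smallest and largest search points in each objective receive an infinite distance. On the \lotz front, sorting by $f_1$ identifies $x_1$ and $x_\mu$ as the boundary points, and since $f_2 = n - f_1$ merely reverses this order, the boundary points for $f_2$ are the same two search points. Thus $x_1$ and $x_\mu$ each receive contribution $\infty$, while every interior point receives a finite value (the denominators $f_m^{\max}-f_m^{\min}$ are positive because $\mu\ge 2$). Consequently HDC again selects an extreme point, breaking ties uniformly between $x_1$ and $x_\mu$.

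The only genuine subtlety, and the reason the reference point is pushed out as far as $(-n^2,-n^2)$ rather than merely $(-1,-1)$, lies in the HVC case: I must guarantee that no interior product of gaps can ever catch up with an extreme contribution. The crude bounds $n^2/4$ for the interior and $n^2$ for the extremes give the required strict separation with room to spare, so this step is comfortable rather than delicate, and no finer analysis of the gap structure of $P$ is needed.
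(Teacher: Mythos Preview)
Your proof is correct and follows essentially the same approach as the paper: bound every interior HVC by $n^2/4$ (the paper maximises $j(n-j)$ directly, you use AM--GM on the two gaps, which is equivalent) and bound each extreme HVC from below by $n^2$ via the reference-point factor, while for CDC simply observe that the boundary points receive $\infty$. Your treatment of the extreme points is in fact slightly cleaner than the paper's, since you do not tacitly assume $x_1=0^n$ and $x_\mu=1^n$ but argue directly from $\f{1}{x_1}\ge 0$, $\f{2}{x_\mu}\ge 0$ and $r_1,r_2\le -n^2$.
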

\begin{proof}
Let us consider an individual $x_i$ of the sorted population according to $f_1$, using the notation from Definition~\ref{def:hypcon}, and let us define $\f{1}{x_0}\le-n^2$ and $\f{2}{x_{\mu+1}}\le-n^2$ as reference point. For any point $x_i=1^j0^{n-j}$ where $1 < i < \mu$, the highest possible contribution that the point $x_i$ can achieve is if it has as neighbours the points $x_1=0^n$ and $x_\mu=1^n$, so we have $\f{1}{x_i}=j$, $\f{1}{x_{i-1}}=\f{1}{x_1}=0$ and $\f{2}{x_i}=n-j$, $\f{2}{x_{i+1}}=\f{2}{x_\mu}=0$. In this sense, by Definition~\ref{def:hypcon}, the highest possible contribution for $x_i$ is $\HVC{x_i}{P}\le (j-0)\cdot(n-j-0)\le j\cdot(n-j)$ and since $j$ is restricted to $0<j<n$, the maximum contribution possible for $x_i$ is when $j=n/2$ and $n-j=n/2$ achieving $\HVC{x_i}{P}\leq n^2/4<n^2$.

For the case of points $x_1=0^n$ or $x_\mu=1^n$, the hypervolume contribution of any of the these two points is at least $n^2$, since the lowest possible contribution for these points is obtained when the individuals $x_2$ or $x_{\mu-1}$ are contained in the population, then $\HVC{x_1}{P}\le n^2\cdot 1$ (the same for $x_\mu$). So we have that $\HVC{x_1}{P}>\HVC{x_i}{P}$ and $\HVC{x_\mu}{P}>\HVC{x_i}{P}$ for all $1 < i < \mu$. 

For the case of CDC, both extreme points are always assigned an infinite diversity score, the highest possible score given by the CDC metric. So all intermediate individuals are ignored by the selection mechanism and HDC only selects the individual with the highest number of zeroes or ones in the population. 
\end{proof}

We further show that gaps emerge and remain with constant probability.
\begin{lemma}
\label{lem:gaps}
In the setting of Theorem~\ref{the:probability-of-gaps}, with probability $\Omega(1)$ the modified GSEMO will evolve a population with a gap at position $n/4 \le i \le 3n/4$.

The probability that this gap will remain at the first generation where the population contains both $0^n$ and $1^n$ is $\Omega(1)$.
\end{lemma}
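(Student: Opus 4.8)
The plan is to exploit the fact that, by Lemma~\ref{lem:hdcsel}, once the population lies on the Pareto front HDC always selects one of the two extreme points, so the population can only grow through the \emph{minimum-ones} and \emph{maximum-ones} frontiers advancing toward $0^n$ and $1^n$. I would first fix the frontier dynamics. If an extreme point $1^j0^{n-j}$ is selected, the only offspring that is both accepted and extends that frontier is $1^{j+k}0^{n-j-k}$ (resp.\ $1^{j-k}0^{n-j+k}$), obtained by flipping the $k$ adjacent boundary bits, which by Definition~\ref{def:tranpro} has probability exactly $p_k$; every other flip pattern produces a point already present, or a point that is dominated or has smaller $\ld$-value and is therefore never selected. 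Call a step in which an extreme point produces a strictly farther Pareto point an \emph{advance}, and a \emph{skip} an advance by $k\ge 2$, which creates a gap in the sense of Definition~\ref{def:gap}. Conditioned on an advance, the probability of a skip is at least $p_2/\sum_{k\ge1}p_k=\Theta(1/n)$, uniformly over the history, since the $p_k$ depend only on $k$ and not on the frontier position.

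For the first claim I would show that a skip occurs inside $[n/4,3n/4]$ with constant probability. To reach a population containing both $0^n$ and $1^n$, the frontiers must collectively pass every position of the middle region $[n/4,3n/4]$, and each such position is either reached by a $+1$ advance or skipped. If no position in this region is ever skipped, then each of its $\Theta(n)$ positions is reached by a distinct $+1$ advance, so at least $n/2$ advance-events start from positions in $[n/4-1,3n/4-1]$. Revealing these advance-events one at a time and using that each is a skip with probability $\ge c/n$ independently of the past, the probability that none of the first $n/2$ of them is a skip is at most $(1-c/n)^{n/2}=e^{-\Theta(1)}<1$. Hence a gap at some $g\in[n/4,3n/4]$ forms with probability $1-e^{-\Theta(1)}=\Omega(1)$.

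For the second claim I would track the distance $D$ from the gap $g$ to the nearer frontier. The crucial observation is that after a skip the intermediate neighbour of $g$ (the old extreme point, now interior) is present but, by Lemma~\ref{lem:hdcsel}, never selected, so the gap can only be refilled by an extreme point at distance $D$ flipping exactly $D$ bits, which has probability $p_D$ per selection, against probability $\sim p_1$ of advancing away. Since both frontiers only move away from $g$, the distance $D$ is non-decreasing. Right after a skip by two we have $D=1$, where filling and advancing are both single boundary-bit flips of probability $p_1$, so the frontier advances to $D=2$ before filling with probability $\ge 1/2-o(1)$. Once $D\ge 2$, the probability of ever filling is at most $\sum_{D\ge2}p_D/p_1=\sum_{s\ge1}(n-1)^{-s}=O(1/n)=o(1)$ by a union bound over the distances traversed, with a symmetric bound for a refill from the opposite frontier. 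Therefore the gap survives until both extremes are present with probability $(1/2-o(1))(1-o(1))=\Omega(1)$.

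The main obstacle will be the second claim, specifically making rigorous that once a frontier has moved a single step past the gap the refilling probability is genuinely $o(1)$: this relies on the facts that HDC never selects the interior neighbour of $g$, that $D$ is monotone, and on a careful geometric-series bound summed over the whole remaining run and over both frontiers, including the boundary case that arises when the first Pareto point lies near the middle, where both neighbours of the gap can be momentarily extreme (there $D=1$ on both sides, giving persistence probability $\ge(1/2-o(1))^2=\Omega(1)$ rather than $\ge 1/2-o(1)$). For the first claim the only delicate point is treating successive advance-events as near-independent Bernoulli skip-trials, which is justified by the Markov property that the $p_k$ depend only on $k$; skips by more than two only help, since at least one of the resulting gaps then starts already at distance $\ge 2$ from the nearest selectable frontier and persists with probability $1-o(1)$.
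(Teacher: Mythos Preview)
Your proposal is correct and follows essentially the same approach as the paper's proof. Both arguments (i) show that each of the $\Theta(n)$ advance-events through the middle region is a skip with conditional probability $\Theta(1/n)$, yielding a skip with probability $1-e^{-\Theta(1)}=\Omega(1)$, and (ii) show that once a gap at $g$ exists, filling it from distance $d$ competes unfavourably against advancing the frontier by one step ($p_d$ versus $p_1$), giving a survival probability of $\Omega(1)$. The only organisational difference is that the paper indexes the second argument by the \emph{position} $i$ of the current extreme point, bounding the probability of not filling from $i$ by $p_1/(p_1+p_{|g-i|})$ and taking the product over all $i\ne g$, whereas you index by the \emph{distance} $D$ from the frontier to the gap and sum the fill probabilities; these are equivalent reparametrisations, and the paper's leading factor $1/4$ from the two immediate neighbours $g\pm 1$ corresponds exactly to your $(1/2-o(1))^2$ in the boundary case.
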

\begin{proof}
In the following, we identify a search point $1^i 0^{n-i}$ with its index~$i$.
Note that, as long as no gap at index $n/4 \le i \le 3n/4$ is being created, the population spreads on this subset of the Pareto front as one Hamming path. This Hamming path is likely to start at some index $n/4 \le i \le 3n/4$ and then spread towards lower and higher indices, but it could also start at an index $i < n/4$ and spread towards higher indices, or start at $i > 3n/4$ and spread towards lower indices.
This means that, for every index $n/4+1 \le j \le 3n/4-1$ there will eventually be a search point $1^j 0^{n-j}$ that will be chosen as parent, and (depending on the direction of the spread) at least one Hamming neighbour from $\{1^{j-1} 0^{n-j+1}, 1^{j+1}0^{n-j-1}\}$ will not be contained in the population. Without loss of generality let this be $1^{j-1}0^{n-j+1}$ (the other case is symmetric). Then with probability at least $p_2$ a mutation of $1^j 0^{n-j}$ will create a search point with smaller index than $j-1$, creating a gap at position $j-1$. With probability $p_1$ the modified GSEMO will create $1^{j-1} 0^{n-j+1}$, and there will never be a gap at position $j-1$. Considering these two events, the conditional event of creating a gap, given that another search point on the front with smaller index is created, is at least 
\[
\frac{p_2}{p_1 + p_2} \ge \frac{p_2}{p_1} = \frac{1}{n-1}.
\]
The probability that at least one index $n/4 + 1 \le j \le 3n/4-1$ (of which there are $n/2 - O(1)$ many) will lead to the creation of a gap is at least
\begin{align*}
&1 - \left(1 - \frac{1}{n-1}\right)^{n/2-O(1)}\\
=\;& 1 - \left(1 - \frac{1}{n-1}\right)^{(n-1)/2} \cdot \left(1 - \frac{1}{n-1}\right)^{O(1)}
\ge 1 - e^{-1/2} - O(1/n)=\Omega(1)
\end{align*}
where the inequality used $\left(1 - \frac{1}{n-1}\right)^{n-1} \le 1/e$ and Bernoulli's inequality.

Now assume that a gap has been created at position~$g$ with $n/4 \le g \le 3n/4$. From here on, every index $1 \le i \le n-1$ has a chance to fill the gap if the population contains $1^i 0^{n-i}$, this search point is being chosen as parent, and mutation flips $|g-i|$ bits to create $1^g 0^{n-g}$, hence filling the gap. Note that, if $1^i 0^{n-i}$ is picked as parent, and without loss of generality $i < g$, if mutation creates an offspring $1^j0^{n-j}$ with $j < i$ then $1^i 0^{n-i}$ will never be selected as parent again, and the gap at~$g$ will never be filled from index~$i$. Considering these two events, the conditional probability of \emph{not} filling the gap from index $i < g$ is at least
\[
\frac{p_{1}}{p_1 + p_{g-i}} = \frac{(n-1)^{-1}}{(n-1)^{-1} + (n-1)^{i-g}} = \frac{1}{1+(n-1)^{1+(i-g)}}.
\]
The above is $1/2$ if $i=g-1$ and at least $1-(n-1)^{1-|i-g|}$ for $i < g-1$. The same probability bounds hold for $i=g+1$ and $i>g+1$, respectively.
Note that mutations from index $i$ are independent from mutations on other indices, hence we can multiply probability bounds for all indices $i \neq g$. 
Hence, the probability that the gap is \emph{not} filled from any index $i \neq g$ is at least
\begin{align*}
& \frac{1}{2} \cdot \prod_{1 \le i < g-1} \left(1 - (n-1)^{1-|i-g|}\right)
\cdot
\frac{1}{2} \cdot \prod_{g+1 < i \le n-1} \left(1 - (n-1)^{1-|i-g|}\right)\\
\ge\;& \frac{1}{4} \cdot \left(\prod_{d=2}^\infty \left(1 - (n-1)^{1-d}\right)\right)^2\\
\ge\;& \frac{1}{4} \cdot \left(1 - \sum_{d=2}^\infty (n-1)^{1-d}\right)^2\\
=\;& \frac{1}{4} \cdot \left(1 - \sum_{d=1}^\infty (n-1)^{-d}\right)^2\\
=\;& \frac{1}{4} \cdot \left(1 - \frac{1}{n-2}\right)^2 = \Omega(1).\qedhere
\end{align*}
\end{proof}

Now we can make use of Lemma~\ref{lem:gaps} to prove Theorem~\ref{the:probability-of-gaps}.

\begin{proof}[Proof of Theorem~\ref{the:probability-of-gaps}]
A sufficient condition for finding all points on the Pareto front in the setting of Theorem~\ref{the:probability-of-gaps} is to always create a new point on the Pareto front via 1-bit mutations. Because global mutations are used, it is possible to create a new search point on the Pareto front by making a $k$-bit jump, for $k \ge 2$, with probability $p_k$.

Let $E$ be the event that a new point is created on the Pareto front via 1\nobreakdash-bit flip, and let $B$ be the event of creating a new point on the Pareto front. We have $\Prob{E} \ge p_1$, where the inequality becomes an equality if there is only one possible 1-bit flip applicable. The probability of event $B$ is at most $\Prob{B} \le \Prob{E} + 2p_2 + 2p_3 + \dots + 2p_n$, taking into account all possible jump lengths, and the fact that it may be possible to make jumps in both directions. The conditional probability of event $E$ is at least

\begin{align*}
\Prob{E\mid B}\geq\;& \frac{p_1}{p_1+2p_2+2p_3+\ldots+2p_n} \\
=\;& \frac{\left(1-\frac{1}{n}\right)^n\cdot(n-1)^{-1}}{\left(1-\frac{1}{n}\right)^n\cdot((n-1)^{-1}+2(n-1)^{-2}+\ldots+2(n-1)^{-n})} \\
=\;&\frac{1}{1+2(n-1)^{-1}+\ldots+2(n-1)^{n-1}}\\
\geq\;&\frac{1}{1+2\displaystyle\sum_{i=1}^{\infty}(n-1)^{-i}}\\
=\;&\frac{1}{1+\frac{2}{n-2}} = 1 - \frac{\frac{2}{n-2}}{1+\frac{2}{n-2}} = 1-\frac{2}{n}.
\end{align*}

Now, the same probability bounds hold for all $i$ on the Pareto front. Mutations from point $i$ are independent from mutations on other indices, hence we can multiply the probability for all indices $i$. Hence, the probability of creating a new point due to 1-bit mutation is at least
\[
\prod_{i=1}^{n}\left(1-\frac{2}{n}\right)=\left(1-\frac{2}{n}\right)^n=\Omega(1).
\]

Now we have proved that the modified GSEMO is able to find all points on the Pareto front via 1-bit mutation, and by Lemma~\ref{lem:gaps}, the modified GSEMO will create a gap at position $n/4\leq i\leq 3n/4$ via more than 1-bit flip and this gap will remain after the points $0^n$ and $1^n$ have been found with probability $\Omega(1)$. At this point it will be necessary to flip at least $n/4$ specific number of bits from one of the extreme points in order to ``fill'' a gap. By Definition~\ref{def:tranpro}, the probability of making a $n/4$ jump from any extreme point is at most
\[
p_{n/4} = \left(1-\frac{1}{n}\right)^n \cdot (n-1)^{-n/4} = n^{-\Omega(n)}.
\]

Since the above probability bound holds for all current $n/4\leq i\leq 3n/4$ gaps, we get that the algorithm requires at least exponential runtime $n^{\Omega(n)}$ to fill all the remaining $i$ gaps.
\end{proof}

Note that the poor performance of the modified GSEMO is down to the choice of mutation operator, and the possibility of flipping multiple bits in one mutation. In contrast, SEMO using local mutations finds the Pareto front efficiently when HDC is used.
\begin{theorem}
Consider SEMO with Highest Diversity Contribution, choosing as diversity metric either CDC or HVC with a reference point $(-r, -r)$ for $r \ge 1$ on the function \lotz. Then the expected time for finding the whole Pareto front is $O(n^2)$.
\end{theorem}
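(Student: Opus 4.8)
The plan is to reduce the claim to establishing that, once SEMO has reached the Pareto front, Highest Diversity Contribution selects a \good individual with probability $\pgood = \Omega(1)$, and then to invoke Lemma~\ref{lem:semlotz}. That lemma already gives expected time $O(n^2)$ for reaching the front (via Lemma~1 in \cite{Laumanns2004}) and bounds the remaining time by $O(n^2/\pgood)$, so $\pgood = \Omega(1)$ immediately yields the desired $O(n^2)$. The crucial structural observation driving everything is that, in contrast to the modified GSEMO of Theorem~\ref{the:probability-of-gaps}, SEMO uses only $1$-bit flips: from a point $1^i0^{n-i}$ on the Pareto front, a single mutation either produces a Hamming-neighbouring Pareto point $1^{i\pm1}0^{n-i\mp1}$ or a point dominated by the parent. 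Hence the index set of the population on the front is always an \emph{interval}, i.e.\ the population is a contiguous Hamming path $\{1^i0^{n-i}\mid a\le i\le b\}$ with no gaps, and the \good individuals are precisely the two endpoints that still have room to extend, namely $1^a0^{n-a}$ with $a\ge 1$ and $1^b0^{n-b}$ with $b\le n-1$.

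It then remains to show that HDC selects one of these \good endpoints with constant probability. For the hypervolume contribution with reference point $(-r,-r)$, I would compute the contributions directly on a contiguous path using Definition~\ref{def:hypcon}: every interior point has $\HVC{\cdot}{P}=1$, while the left and right endpoints have contributions $a+r$ and $n-b+r$, respectively (the ``room'' plus $r$). Thus an endpoint with room at least $1$ has contribution at least $1+r>1$ and is strictly preferred over all interior (\bad) points, while a boundary point without room, such as $0^n$ with contribution $r$, is strictly beaten in contribution by the opposite endpoint whenever that side still has room, which is the case until the front is covered. Consequently HDC always selects a \good endpoint, giving $\pgood=1$. This is exactly the point where local mutations save the day: the extreme points $0^n,1^n$ can attract selection, but with no gaps to fill this never stalls progress, whereas for GSEMO the same attraction forces harmful multi-bit jumps.

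For the crowding distance contribution I would instead use that the current minimum- and maximum-$f_1$ points, which are exactly the current path endpoints, receive infinite \cdc while all interior points receive finite values, so HDC selects one of the two endpoints. Since a contiguous path containing both $0^n$ and $1^n$ already covers the whole front, in any non-terminal state at most one endpoint is a boundary point without room; breaking the tie between the two infinite-valued endpoints uniformly at random therefore selects a \good endpoint with probability at least $1/2=\Omega(1)$. In both cases $\pgood=\Omega(1)$, and Lemma~\ref{lem:semlotz} gives expected time $O(n^2/\pgood)=O(n^2)$.

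The main obstacle is not a long calculation but the verification of the two invariants on which the whole argument hinges: first, that SEMO's population on the front is gap-free, so that selecting an endpoint always makes progress with probability $\Theta(1/n)$ and the accounting method of Lemma~\ref{lem:semlotz} applies verbatim despite the selection being deterministic; and second, the contribution comparison at the boundary, ensuring that the presence of a \bad extreme point $0^n$ or $1^n$ cannot monopolise HDC's choice while the opposite side still has uncovered points. This boundary case is precisely where the symmetry and magnitude of the reference point $(-r,-r)$, $r\ge 1$, matter, and where the behaviour diverges sharply from the GSEMO analysis of Theorem~\ref{the:probability-of-gaps}.
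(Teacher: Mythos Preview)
Your proposal is correct and follows essentially the same approach as the paper: establish that the population on the front is a contiguous Hamming path, compute the endpoint contributions $a+r$ and $n-b+r$ versus interior contribution~$1$ to conclude $\pgood=1$ for HVC (and $\pgood\ge 1/2$ for CDC via the tie-breaking argument), then invoke Lemma~\ref{lem:semlotz}. The paper's proof is the same argument with the same constants.
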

\begin{proof}
We already know that SEMO reaches the Pareto front in expected time $O(n^2)$. Afterwards, the population spreads on the Pareto front as one Hamming path. Let $P = \{1^i0^{n-i}, 1^{i+1}0^{n-i-1}, \dots,\allowbreak 1^{j-1}0^{n-j+1} 1^j0^{n-j}\}$ be the current population sorted according to the number of ones, with $i, j$ being the minimum and maximum number of ones, respectively. 

Then for any $k$ with $i < k < j$ we have $\HVC{1^k0^{n-k}}{P} = 1$ in addition to $\HVC{1^i0^{n-i}}{P} = i+r$ and $\HVC{1^j0^{n-j}}{P} = n-j+r$. The latter two values simplify to~$r$ if $i=0$ or $j=n$, respectively, that is, for $0^n$ and $1^n$. For all values $i > 0$ we have $\HVC{1^i 0^{n-i}}{P} \ge r+1$ and the same holds for $j < n$ implying $\HVC{1^i0^{n-j}}{P} \ge r+1$. This implies that the highest diversity contribution is always attained for a good search point, as long as the whole Pareto front has not been found yet. In other words, $\pgood = 1$ and we obtain an upper bound of $O(n^2)$ by following the arguments from Section~\ref{sec:lotzpro}. 

For CDC, we have $\pgood \ge 1/2$ as both $1^i0^{n-i}$ and $1^j0^{n-j}$ have a crowding distance contribution of $\infty$, and at least one of them must be different from $0^n$ and $1^n$. The upper bound of $O(n^2)$ follows as before.
\end{proof}

\subsection{NMUAR is Fast but Brittle}
\label{sec:disnmuars}

As mentioned previously and based on the results of Table \ref{tab:expomm}, \ref{tab:explotz} and~\ref{tab:explranlotz}, NMUAR empirically performs well for SEMO, GSEMO and the modified GSEMO with any diversity metric in its different variants. No stagnation was detected during the experimental analysis made in Section~\ref{sec:exp}. It seems that the selection mechanism performs better compared with the other selection approaches. Nevertheless, as an observant reviewer for~\cite{Covantes2017} pointed out, it is possible to find populations where the probability of selecting a good parent is $0$, and the analytical framework used in Sections~\ref{sec:ommpro} and~\ref{sec:lotzpro} breaks down. 

Two such populations are shown in Figure~\ref{fig:counterex}. In Figure~\ref{fig:n}, $1^n$ is \bad as it can not produce yet unseen points with local mutations on the Pareto front. However, depending on the choice of reference point, it may have the highest hypervolume contribution. The remaining point, while being \good has the minimum hypervolume contribution. So, it is never picked as a parent by the NMUAR scheme. This means that the algorithm will never select a good search point, which leads to a stagnation state. Furthermore, Figure~\ref{fig:somen} shows that for the case of certain problem sizes, more points can be added on the Pareto front, such that all non-boundary points feature the same, minimum hypervolume contribution, and all of these points are ignored by NMUAR, leaving only bad search points $0^n$ and $1^n$ that may be selected as parents. This also shows that the example from Figure~\ref{fig:n} is not unique. 

\begin{figure*}[ht]
    \centering
    \begin{subfigure}[t]{0.49\textwidth}
        \centering
        \resizebox{\linewidth}{!}{
        	\begin{tikzpicture}[domain=0:8,xscale=0.8,yscale=0.8,scale=1, every shadow/.style={shadow xshift=0.0mm, shadow yshift=0.4mm}]
          \tikzstyle{helpline}=[black,very thick];
          \tikzstyle{function}=[blue,thick];
          \tikzstyle{individual}=[blue,very thick];
          \tikzstyle{pareto}=[red,very thick];
		  \draw[black!40,line width=0.2pt,xstep=1,ystep=1] (0,0) grid (8.5,8.5);          
          \draw[helpline, -triangle 45] (0,0) -- (0,9) node[above] {\Large $f_{2}$}; 
          \draw[helpline, -triangle 45] (0,0) -- (9,0) node[right] {\Large $f_{1}$};
          
          \draw[helpline] (0,0) -- (-0.2,0) node[left] {\Large $0$};
          \draw[helpline] (0,4) -- (-0.2,4) node[left] {\Large $n/2$};
          \draw[helpline] (0,8) -- (-0.2,8) node[left] {\Large $n$};
          \draw[helpline] (0,0) -- (0,-0.2) node[below] {\Large $0$};
          \draw[helpline] (4,0) -- (4,-0.2) node[below] {\Large $n/2$};
          \draw[helpline] (8,0) -- (8,-0.2) node[below] {\Large $n$};
          \draw[function] (0,8) -- (8,0);
		  \foreach \x/\y in {0/8,1/7}
                \filldraw[pareto] (\x,\y) circle (4pt);              
          \draw[helpline] (0.3,7.7) -- (-0.2,7.55) node[left] {\Large $F^*_n$};      
		\end{tikzpicture}
		}
        \caption{$n=8$}
        \label{fig:n}
    \end{subfigure}
    \begin{subfigure}[t]{0.49\textwidth}
        \centering
        \resizebox{\linewidth}{!}{
        	\begin{tikzpicture}[domain=0:7,xscale=0.7,yscale=0.7,scale=1, every shadow/.style={shadow xshift=0.0mm, shadow yshift=0.4mm}]
          \tikzstyle{helpline}=[black,very thick];
          \tikzstyle{function}=[blue,thick];
          \tikzstyle{individual}=[blue,very thick];
          \tikzstyle{pareto}=[red,very thick];
		  \draw[black!40,line width=0.2pt,xstep=1,ystep=1] (0,0) grid (7.5,7.5);          
          \draw[helpline, -triangle 45] (0,0) -- (0,8) node[above] {\Large $f_{2}$}; 
          \draw[helpline, -triangle 45] (0,0) -- (8,0) node[right] {\Large $f_{1}$};
          
          \draw[helpline] (0,0) -- (-0.2,0) node[left] {\Large $0$};
          \draw[helpline] (0,4) -- (-0.2,4) node[left] {\Large $n/2$};
          \draw[helpline] (0,7) -- (-0.2,7) node[left] {\Large $n$};
          \draw[helpline] (0,0) -- (0,-0.2) node[below] {\Large $0$};
          \draw[helpline] (4,0) -- (4,-0.2) node[below] {\Large $n/2$};
          \draw[helpline] (7,0) -- (7,-0.2) node[below] {\Large $n$};
          \draw[function] (0,7) -- (7,0);
		  \foreach \x/\y in {0/7,1/6,3/4,4/3,6/1,7/0}
                \filldraw[pareto] (\x,\y) circle (4pt);              
          \draw[helpline] (0.3,6.7) -- (-0.2,6.55) node[left] {\Large $F^*_n$};      
		\end{tikzpicture}
		}
        \caption{$n=7$}
        \label{fig:somen}
    \end{subfigure}
    \caption{Examples of populations where NMUAR with CDC or HVC may only select \bad individuals from $\{0^n, 1^n\}$ on \omm and/or \lotz, depending on the choice of reference point (all non-extreme points have the same score, NMUAR only selects extreme points).}
    \label{fig:counterex}
\end{figure*}
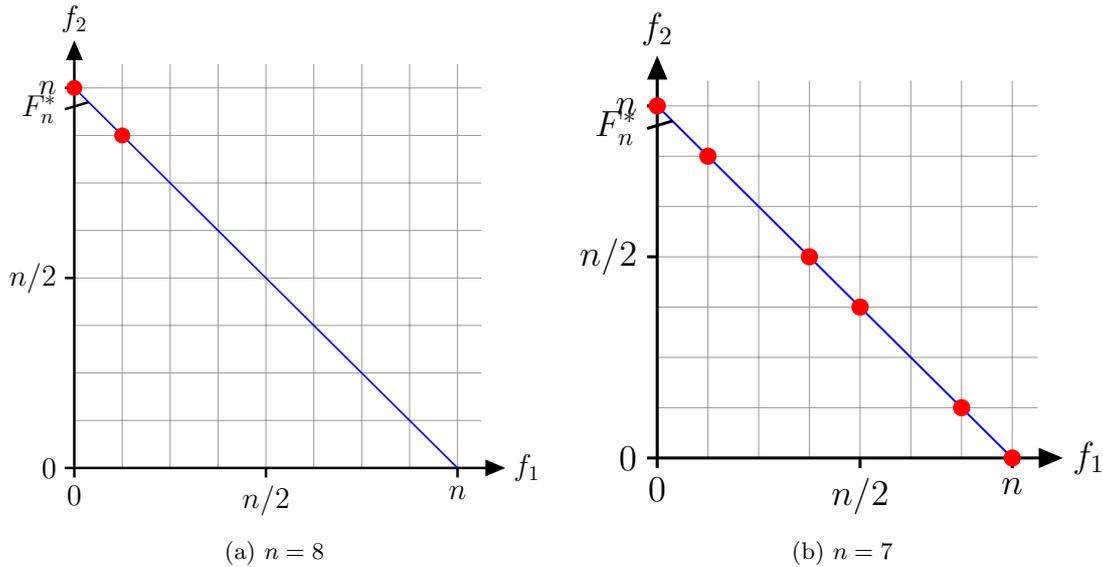

In the following we show that, despite these risks, NMUAR is able to efficiently optimise both \omm and \lotz. First we define the following probability of selecting \good individuals by providing some additional arguments on how to deal with different situations for $\pgood$.

\begin{lemma}
\label{lem:nmuar_prob}

Let $P$ denote the current population and $P' \subseteq P$ denote the population from which NMUAR selects uniformly at random. Consider \omm or \lotz and assume that the Pareto front has been reached, but $P$ does not cover the whole front.
The probability $\pgood$ of selecting a \good individual using CDC or HVC with any reference point dominated by $(-1,-1)$ and NMUAR selection is $\pgood=\Omega(1)$ if one of the following conditions is met:

\begin{enumerate}
\item $P$ contains neither $0^n$ nor $1^n$,
\item $P$ contains a search point $x \in \{0^n, 1^n\}$ and $x$ is good, or
\item $P$ contains individuals with $f_1$-values $i$, $i+1$, and~$i+2$, for some value $0 \le i \le n-2$.
\end{enumerate}
\end{lemma}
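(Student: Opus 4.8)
The plan is to reduce everything to a single structural statement about the set $P'$ from which NMUAR draws uniformly at random: $P'$ always contains at least one \good individual and at most two \bad individuals. Granting this, a \good parent is selected with probability at least $1/(1+2)=1/3=\Omega(1)$, which is the claim. So all the work goes into proving this ``at most two \bad, at least one \good'' dichotomy under each of the three hypotheses.

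First I would isolate two facts that require neither the conditions nor a specific metric. Since the Pareto front has been reached but is not covered, the argument of Lemma~\ref{lem:r1-r2-r3} guarantees a \good individual in $P$. Moreover, at most two \bad individuals can survive the culling: by Lemmas~\ref{lem:hypergood} and~\ref{lem:cdcgood} every \bad point outside $\{0^n,1^n\}$ realises the minimum possible score (value $1$ for HVC, the least crowding distance for CDC) while every \good point scores strictly higher. Hence whenever an interior \bad point is present the scores are not all equal, NMUAR deletes the entire minimum level, and that level contains every interior \bad point; if no interior \bad point is present, the \bad points are extreme to begin with. Either way the only \bad survivors lie in $\{0^n,1^n\}$, so $P'$ contains at most two \bad individuals.

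It then suffices to exhibit one surviving \good point. Under condition~1 there are no extreme points, so all \bad points are interior and removed, leaving $P'$ entirely \good and $\pgood=1$. Under condition~3 the middle search point of the three consecutive $f_1$-values $i,i+1,i+2$ is interior with both Hamming neighbours in $P$, hence \bad and at the minimum level; every \good point lies strictly above that level and therefore survives, giving $\pgood\ge 1/3$. Under condition~2 an extreme point $x$ is \good: for CDC it carries the infinite, maximal crowding distance and so is never at the minimum, surviving outright; for HVC its missing neighbour leaves a gap of width at least $2$, making its contribution at least twice a reference offset, whereas the only other possible \bad survivor, the opposite extreme, contributes just one reference offset, so a symmetric reference point $(-r,-r)$ makes $x$ strictly outscore it and $x$ survives.

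The delicate step is exactly this last HVC case of condition~2, and it is where I expect the real difficulty to lie. A \good extreme scaled by one reference coordinate can be cheaper than a \bad opposite extreme scaled by the other; for a strongly asymmetric reference point the \bad extreme then dominates, and the \good extreme is culled together with all interior \good points—precisely the stagnation this section exhibits. The argument therefore needs the two reference coordinates to be comparable (the symmetric choice $(-r,-r)$ being the clean case), and the crux is to verify that a \good extreme then always beats the opposite extreme, so that it cannot be deleted while a \bad extreme remains. A final, minor case to discharge is the degenerate situation in which all contributions coincide and NMUAR removes nothing: there $P'=P$, and one checks directly that the hypotheses force the \good individuals to be a constant fraction of $P$.
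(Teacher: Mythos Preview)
Your approach is essentially the paper's: both argue that every \bad point in $\{0,1\}^n\setminus\{0^n,1^n\}$ realises the minimum diversity score while every \good point scores strictly higher, so after NMUAR's culling the only \bad survivors lie in $\{0^n,1^n\}$; conditions~1 and~3 are then dispatched exactly as you do, with the same bounds $\pgood=1$ and $\pgood\ge 1/3$.

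The only real divergence is condition~2. The paper handles it by pure counting: if $x\in\{0^n,1^n\}$ is \good, then the sole candidate for a \bad point in $P'$ is $\overline{x}$, hence $\pgood\ge 1/2$. It never argues that $x$ itself survives. You instead try to prove $x$ survives by comparing $\HVC{x}{P}$ against $\HVC{\overline{x}}{P}$, and you rightly observe that this comparison only goes the desired way for a (roughly) symmetric reference point. Your caution here is well-placed and actually flags a subtlety the paper's proof glosses over: with a sufficiently asymmetric reference point one can build populations satisfying condition~2 in which every \good point, including $x$, sits at the minimum level while the \bad $\overline{x}$ alone survives, giving $\pgood=0$. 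The paper sidesteps this by restricting to symmetric reference points $(-r,-r)$ in the theorems that actually invoke the lemma, but does not make that restriction explicit in the lemma's proof. So your extra hypothesis is not a defect of your argument; it is the honest price of making condition~2 work for HVC, and the paper's one-line treatment is looser than yours on this point.
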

\begin{proof}
For HVC, note that any potential bad search points from $\{0, 1\}^n \setminus \{0^n, 1^n\}$ will
have the same diversity score of 1, which is minimal amongst all possible HVC values. All good search points have a larger diversity score. This means that NMUAR will never choose a parent with minimum HVC score. The same applies to CDC where the minimum value depends on values $f_m^{\min}$ and $f_m^{\max}$.

The only risk is that NMUAR may choose a bad search point from $\{0^n, 1^n\}$. While the population does not contain any such points, $\pgood=1$. As long as the population contains a search point $x \in \{0^n, 1^n\}$ and $x$ is good, we have $\pgood \ge 1/2$ as there can only be one potential bad search point, namely $\overline{x}$, that has a chance to be selected.

The third condition implies that the individual with $f_1$-value $i+1$, which is a bad individual, has a minimum diversity contribution. Hence NMUAR will only remove bad individuals and all good individuals will remain. 
There will be at least one good search point $x \in P$ as long as the population does not cover the whole front; it can be found by scanning the Pareto front, starting at $i$ and moving towards smaller $f_1$-values and starting at $i+2$ in the direction of larger $f_1$ values. In both directions either a search point from $\{0^n, 1^n\}$ or a good search point will be found. 
As the whole front has not been covered yet, at least one direction will result in a good search point. As there can be at most two bad search points in $P'$ ($0^n$ and $1^n$), $\pgood \ge 1/3$.
\end{proof}

Now we can prove the following theorem.

\begin{theorem}
\label{the:nmuar_omm}
The expected time for SEMO and GSEMO to find the whole Pareto front on \omm is bounded by $O(n \log n)$ for the NMUAR selection scheme with either CDC or HVC with a reference point $(-r, -r)$ for $r \ge 1$.
\end{theorem}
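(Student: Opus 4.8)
The plan is to reduce everything to Lemma~\ref{lem:oneminmax}: since that lemma bounds the expected time for SEMO and GSEMO on \omm by $O((n\log n)/\pgood)$, it suffices to exhibit a constant lower bound $\pgood=\Omega(1)$ on the probability of selecting a \good parent that holds at (essentially) every step before the front is covered. The starting point is Lemma~\ref{lem:nmuar_prob}, which already gives $\pgood=\Omega(1)$ whenever one of its three conditions holds. The observation I would build on is that, for both HVC (with any reference point $(-r,-r)$, $r\ge 1$) and CDC, every \bad non-extreme search point attains the minimum possible contribution (value $1$ for HVC; the minimal normalised span for CDC), strictly below the contribution of any \good point, as established in the proofs of Lemmas~\ref{lem:hypergood} and~\ref{lem:cdcgood}. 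Hence, as soon as a single \bad non-extreme point is present, NMUAR deletes \emph{exactly} the \bad non-extreme points and keeps all \good points together with at most the two extremes $0^n,1^n$, so that $\pgood\ge 1/3$.

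For SEMO the argument then closes immediately. Because SEMO uses 1-bit mutations, an accepted offspring always differs from its parent in a single position, so the set of covered $f_1$-values is always a contiguous interval $[\ell,u]$. Before the front is covered we cannot have both $\ell=0$ and $u=n$, so at least one boundary point is a \good non-extreme point; moreover, whenever the interval has length at least three it contains three consecutive $f_1$-values, which is condition~3 of Lemma~\ref{lem:nmuar_prob}. Checking the length-one and length-two intervals by hand then yields $\pgood\ge 1/2$ throughout, and Lemma~\ref{lem:oneminmax} gives the bound $O(n\log n)$.

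For GSEMO the population need not be contiguous, and the only configurations escaping the argument above are those in which no \bad non-extreme point exists (equivalently, no three consecutive $f_1$-values are present, so condition~3 fails) while both $0^n$ and $1^n$ are present and \bad (so conditions~1 and~2 fail). These are exactly the populations of Figure~\ref{fig:counterex}; for HVC they can only occur when $r\ge 3$, since stagnation needs the reference point to lift the extremes strictly above the common contribution of the surviving \good points (which is at least $2$), leaving $P'\subseteq\{0^n,1^n\}$. The key structural fact I would establish is that such a \emph{stuck} population always exposes a gap at $f_1=2$ (and symmetrically at $f_1=n-2$): since $0^n$ is \bad the value $f_1=1$ is present, and if $f_1=2$ were present as well then $f_1=1$ would have both Hamming neighbours present and would be a \bad non-extreme point, a contradiction. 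From $0^n$ a mutation flipping exactly two arbitrary bits creates a point with $f_1=2$ and fills this gap; this happens with probability $\binom{n}{2}n^{-2}(1-1/n)^{n-2}=\Omega(1)$, and NMUAR selects $0^n$ with probability at least $1/2$. After the gap is filled the values $0,1,2$ are consecutive, condition~3 holds again, and the population is no longer stuck.

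It remains to fold the stuck phases into the runtime bound. Every stuck phase is left after an expected $O(1)$ steps and fills at least one previously missing front point, so there can be at most $n+1$ such phases, contributing $O(n)$ in total; all \good-parent (relevant) steps occur during non-stuck phases, where $\pgood\ge 1/3$, so the accounting of Lemma~\ref{lem:oneminmax} still bounds their total number by $O(n\log n)$ irrespective of how the phases interleave. Summing the two contributions yields the claimed $O(n\log n)$. I expect the main obstacle to be precisely this last bookkeeping step: making rigorous that the accounting argument of Lemma~\ref{lem:oneminmax}, which implicitly presumes a uniform per-step success probability, survives the interruption by stuck phases, and verifying that gaps filled from the extreme points during stuck phases can only reduce, never increase, the number of relevant steps charged to the interior positions.
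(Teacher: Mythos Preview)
Your proposal is correct and follows essentially the same approach as the paper: reduce to Lemma~\ref{lem:oneminmax} via Lemma~\ref{lem:nmuar_prob}, handle SEMO through contiguity of the covered interval, and for GSEMO escape the pathological $P'\subseteq\{0^n,1^n\}$ configuration by a two-bit mutation from an extreme point that creates three consecutive $f_1$-values.

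The one simplification you miss, which eliminates the bookkeeping you flag as the ``main obstacle'', is that on \omm no search point ever leaves the population (nothing dominates anything), so once the values $n,n-1,n-2$ (or $0,1,2$) are present they remain present forever; condition~3 of Lemma~\ref{lem:nmuar_prob} therefore holds \emph{for all future populations} after a single $O(1)$-expected escape step, there is at most one stuck phase rather than up to $n+1$, and the accounting of Lemma~\ref{lem:oneminmax} applies directly with $\pgood=\Omega(1)$ from that point on. Your more cautious bound of $n+1$ stuck phases is still correct and yields the same $O(n\log n)$, and your reasoning that gap-filling during stuck phases can only reduce the relevant-step count charged to interior positions is sound; it is simply unnecessary once you observe the permanence of condition~3. (As a minor aside, your remark that HVC-stagnation requires $r\ge 3$ should read $r>2$, since $r$ need not be an integer, but this side observation is not used in the argument.)
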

\begin{proof}
Let $P$ and $P'$ be as in Lemma~\ref{lem:nmuar_prob}. Whenever $\pgood = \Omega(1)$ we can apply the arguments from Section~\ref{sec:ommpro}, but we need to provide additional arguments to deal with possible settings where $\pgood$ is not guaranteed to be~$\Omega(1)$.
In order for $\pgood \notin \Omega(1)$ to hold, we must have $P' \subseteq \{0^n, 1^n\}$ with all members of $P'$ being bad. This implies that, if $1^n \in P'$, the population must contain a search point with $n-1$ ones (as otherwise $1^n$ would be good) and it cannot contain any search point with $n-2$ ones (as otherwise the third condition of Lemma~\ref{lem:nmuar_prob} would be true). The same logic applies to $0^n$ and its neighbours.

We show that such a pathological case where $\pgood \notin \Omega(1)$ is impossible for SEMO, due to our assumptions on the choice of reference point $(-r, -r)$. For \omm all points are in the Pareto front, and because local mutations are being used, the population always contains all possible $f_1$ values in some integer range. 
Hence the population can only be $P = \{1^n, x\}$ where $x$ has $n-1$ ones, or $P = \{0^n, x'\}$ where $x'$ has a single one. W.\,l.\,o.\,g.\ the former is the case. Then CDC assigns value $\infty$ to both search points, hence $\pgood = 1/2$. For HVC we have $\hvc(1^n) = r$ and $\hvc(x) = n-1+r$, hence $P' = \{x\}$ and $\pgood = 1$.

For GSEMO, if $P' \subseteq \{0^n, 1^n\}$ and w.\,l.\,o.\,g.\ $1^n \in P'$, $1^n$ is selected as parent with probability at least $1/2$. Any mutation of $1^n$ flipping two arbitrary bits will create a search point with $n-2$ ones, which then fulfils the third condition from Lemma~\ref{lem:nmuar_prob} for the next and all future populations. The expected waiting time for making this mutation is $O(1)$.
Afterwards, $\pgood = \Omega(1)$ by Lemma~\ref{lem:nmuar_prob} and we obtain an upper bound for both SEMO and GSEMO of $O(n \log n)$ following the previous analyses from Section~\ref{sec:ommpro}. 
\end{proof}

Similar arguments can be used to prove that SEMO and the modified GSEMO can optimise \lotz efficiently.
\begin{theorem}
\label{the:nmuar_lotz}
The expected time for SEMO and the modified GSEMO to find the whole Pareto front on \lotz is bounded by $O(n^2)$ for the NMUAR selection scheme with either CDC or HVC with a reference point $(-r, -r)$ for $r \ge 1$.
\end{theorem}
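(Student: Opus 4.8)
The plan is to follow the two–phase structure of the proof of Theorem~\ref{the:nmuar_omm}. First, the population reaches the Pareto front in expected time $O(n^2)$: for SEMO by the first part of Lemma~\ref{lem:semlotz}, and for the modified GSEMO by Lemma~\ref{lem:lotz_g1}. Second, to cover the front I would combine the accounting argument of Lemma~\ref{lem:semlotz} (resp.\ Lemma~\ref{lem:mgsemolotz}), which bounds the covering time by $O(n^2/\pgood)$, with Lemma~\ref{lem:nmuar_prob}, which guarantees $\pgood=\Omega(1)$ as soon as the current population satisfies one of its three conditions. Thus the analysis reduces to understanding the populations on the front for which $\pgood$ is \emph{not} $\Omega(1)$, namely the pathological configurations of Figure~\ref{fig:counterex}.

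For SEMO the pathological case cannot occur. Since only local mutations are used, once the front is reached the front-population is always a contiguous interval $\{1^i0^{n-i},\dots,1^j0^{n-j}\}$ of indices. If this interval contains at least three points then three consecutive $f_1$-values are present and the third condition of Lemma~\ref{lem:nmuar_prob} gives $\pgood\ge 1/3$. If it contains at most two points I would treat it directly: a single point is good (the front is not yet covered), so $\pgood=1$, and for a pair the only risk is a bad endpoint in $\{0^n,1^n\}$. Here the symmetric reference point is decisive: when the interval is $\{1^{n-1}0^1,1^n\}$ we have $\HVC{1^n}{P}=r < n-1+r = \HVC{1^{n-1}0^1}{P}$, so NMUAR discards the bad point $1^n$ and selects the good one, giving $\pgood=1$ (and $\pgood\ge 1/2$ for \cdc, which assigns $\infty$ to both endpoints); the $0^n$ end is symmetric. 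Hence $\pgood=\Omega(1)$ throughout and Lemma~\ref{lem:semlotz} yields $O(n^2)$.

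For the modified GSEMO gaps can be created, so configurations with $\pgood$ not $\Omega(1)$ do arise, and the heart of the proof is to show they are left quickly. I would first argue that in any such configuration the NMUAR selection set is contained in $\{0^n,1^n\}$ with both present extremes bad: when the third condition of Lemma~\ref{lem:nmuar_prob} fails every interior point is good, but these good points may all carry the same minimal diversity score (exactly the situation of Figure~\ref{fig:somen}), so NMUAR can remove all of them, leaving only the bad extremes — which survive the minimum-filter precisely because, for the symmetric point $(-r,-r)$, their \hvc equals $r$ and can exceed that minimum (and under \cdc the extremes always receive the maximal score $\infty$). Whenever the selection set lies in $\{0^n,1^n\}$, a present bad extreme, say $1^n$, is chosen with probability at least $1/2$; since $1^n$ being bad forces $1^{n-1}0^1\in P$ while $1^{n-2}0^2\notin P$, the specific two-bit flip of the last two one-bits creates $1^{n-2}0^2$ and hence three consecutive front points, establishing the third condition of Lemma~\ref{lem:nmuar_prob}. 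As front points are never removed, this condition then holds forever and $\pgood=\Omega(1)$ for the remainder of the run.

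The main obstacle, and the one point where \lotz genuinely differs from \omm, is the cost of this escape and the bookkeeping around it. On \omm any two-bit flip of $1^n$ works, so the escape takes expected time $O(1)$; on \lotz a \emph{specific} two-bit flip is required, so the per-step escape probability is only $\Omega(1/n^2)$ and the escape costs expected time $O(n^2)$. Since the third condition, once established, is permanent, the pathological regime is entered at most once before it holds, and I would bound the expected time to first establish it (or to cover the front beforehand) by $O(n^2)$; thereafter $\pgood=\Omega(1)$ and Lemma~\ref{lem:mgsemolotz} bounds the remaining covering time by $O(n^2/\pgood)=O(n^2)$. Summing the phase of reaching the front, the one-off escape, and the covering phase gives the claimed $O(n^2)$. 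The delicate step to make fully rigorous is that intermediate mutations can move the algorithm between non-pathological and pathological configurations before the third condition is first established; I would handle this by treating ``the third condition holds'' as a monotone, absorbing event whose expected hitting time I bound by $O(n^2)$ uniformly over all reachable configurations, combining the $\Omega(1/n^2)$ escape rate in pathological states with the accounting bound on relevant steps in the remaining states.
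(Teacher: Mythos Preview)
Your proposal is correct and follows essentially the same route as the paper: for SEMO you use the contiguous-interval structure together with the symmetric reference point to show $\hvc(1^{n-1}0)>\hvc(1^n)$ (so NMUAR never selects a bad extreme), and for the modified GSEMO you escape the pathological configuration $P'\subseteq\{0^n,1^n\}$ by the specific two-bit mutation $1^n\to 1^{n-2}0^2$ with probability $\Omega(1/n^2)$, establishing the permanent third condition of Lemma~\ref{lem:nmuar_prob} in expected $O(n^2)$ steps. The paper's proof is terser and simply states the $O(n^2)$ waiting time for that mutation without discussing possible oscillation in and out of pathological states; your explicit treatment of this as an absorbing event is a reasonable way to make that step fully rigorous, but it is not a different argument.
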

\begin{proof}
By the same arguments as in the proof of Theorem~\ref{the:nmuar_omm}, in order to have $\pgood \notin \Omega(1)$ the population must contain $1^n$ and $1^{n-1}0$, but not $1^{n-2}00$, or the symmetric constellation involving $0^n$, $10^{n-1}$, and $110^{n-2}$. For SEMO, arguing as in the proof of Theorem~\ref{the:nmuar_omm} the choice of reference point then implies that $\hvc(1^{n-1}0) > \hvc(1^n)$, hence we must always have $\pgood = \Omega(1)$. 

For the modified GSEMO, if $P' \subseteq \{0^n, 1^n\}$ and w.\,l.\,o.\,g.\ $1^n \in P'$, $1^n$ is selected as parent with probability at least $1/2$. The probability of a mutation turning $1^n$ into $1^{n-2}00$ is at least $1/(en^2)$, and once it occurs, it fulfils the third condition from Lemma~\ref{lem:nmuar_prob} for the next and all future populations. The expected waiting time for making this mutation is $O(n^2)$. 
Afterwards, $\pgood = \Omega(1)$ by Lemma~\ref{lem:nmuar_prob} and we obtain an upper bound for both SEMO and the modified GSEMO of $O(n^2)$ following the previous analyses from Section~\ref{sec:lotzpro}. 
\end{proof}

Note that NMUAR is not robust to the choice of the reference point. The proof of Lemma~\ref{lem:nmuar_prob} has revealed a scenario where, with an asymmetric choice of the reference point, SEMO can get stuck.
\begin{theorem}
There is a choice of reference point in the area dominated by $(-1, -1)$ such that SEMO with HVC and NMUAR selection has a positive probability of stagnating on \omm and \lotz.
\end{theorem}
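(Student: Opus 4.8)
The plan is to exhibit the asymmetric reference point $(-n-1,-1)$ together with a small two-point population on the Pareto front that is \emph{absorbing} for SEMO under NMUAR, and from which the whole front can never be reached. Concretely, I would consider the population whose objective vectors are $\{(0,n),(1,n-1)\}$: on \lotz this is $P=\{0^n, 10^{n-1}\}$, and on \omm it is $P=\{0^n, s\}$ for any single-one string $s$. In both cases $0^n$ is \bad (its only Pareto neighbour that could be added, the point of value $(1,n-1)$, is already present) while the second point is \good (its neighbour of value $(2,n-2)$ is still missing). Two things must then be established: (i) this population is absorbing and never covers the front, and (ii) it is reached with positive probability. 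Together these give a positive probability of stagnation, and note that since the theorem only concerns HVC this asymmetry is exactly the feature a tunable reference point provides (CDC would assign $\infty$ to both extremes symmetrically).

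For (i), I would first compute the hypervolume contributions via Definition~\ref{def:hypcon} with $f_1(x_0)=-n-1$ and $f_2(x_{\mu+1})=-1$. Sorting by $f_1$, the point $0^n$ is leftmost, so
\[
\HVC{0^n}{P} = \bigl(0-(-n-1)\bigr)\cdot\bigl(n-(n-1)\bigr) = n+1,
\]
whereas the \good point (of value $(1,n-1)$) is rightmost, giving contribution
\[
(1-0)\cdot\bigl((n-1)-(-1)\bigr) = n.
\]
Thus the \good point has the strict minimum contribution, so NMUAR discards exactly it and always selects the \bad point $0^n$. It then remains to observe that a local mutation of $0^n$ cannot enlarge the covered objective set: flipping the first bit only re-creates the value $(1,n-1)$ already present, and flipping any other bit (position $k>1$ on \lotz) yields a point of value $(0,n-k)$ with $n-k<n$, which is dominated by $0^n$ and rejected; on \omm every single-bit flip reproduces the value $(1,n-1)$. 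Hence the set of objective vectors in the population is frozen at $\{(0,n),(1,n-1)\}$, and since the full front has $n+1$ points the algorithm stagnates.

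For (ii), I would condition on the initial search point being $0^n$, an event of probability $2^{-n}$. Starting from $P=\{0^n\}$, NMUAR must select $0^n$, and the only accepted offspring leads into the trap: on \omm any bit flip produces a single-one string and immediately yields $\{(0,n),(1,n-1)\}$, while on \lotz the chain remains at $\{0^n\}$ until the first bit is flipped (dominated offspring from the other bits are rejected), which happens with probability one over time and produces $10^{n-1}$. In either case the absorbing population is reached almost surely given the starting point, so the probability of never covering the front is at least $2^{-n}>0$.

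The crux, and the step I would treat most carefully, is the hypervolume computation showing that the asymmetric stretch of the reference point in the $f_1$-direction inflates the width of the boundary point $0^n$ to $n+1$ and pushes its contribution strictly above that of the genuinely \good neighbour. This is exactly the configuration ruled out by a symmetric reference $(-r,-r)$, where the boundary width is only $r$ and the \bad extreme point instead receives the minimum contribution (so NMUAR discards it); comparing the two calculations explains why Theorems~\ref{the:nmuar_omm} and~\ref{the:nmuar_lotz} yield polynomial time for symmetric reference points while the present asymmetric choice admits stagnation. A minor subtlety worth stating explicitly is the weakly-dominated replacement on \omm, where re-creating the value $(1,n-1)$ swaps the underlying search point but leaves the covered objective set unchanged.
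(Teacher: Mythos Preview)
Your proof is correct and follows essentially the same approach as the paper: exhibit the asymmetric reference point $(-n-1,-1)$, initialise at an extreme of the front, and show the resulting two-point population is absorbing because the \bad extreme receives the strictly larger HVC while the \good neighbour is the unique minimum and hence is discarded by NMUAR. The only difference is that you work at the $0^n$ corner whereas the paper works at $1^n$; your choice is in fact the one consistent with the reference point as written, since $(-n-1,-1)$ inflates the $f_1$-width of the leftmost point (yielding $\HVC{0^n}{P}=n+1$ and $\HVC{s}{P}=n$), while the paper's values $\hvc(1^n)=n+1$, $\hvc(x)=n$ correspond to the mirrored reference $(-1,-n-1)$.
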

\begin{proof}
Choose the reference point as $(-n-1, -1)$. With positive probability, SEMO is initialised with $1^n$. Then only offspring with an $f_1$ value of $n-1$ are accepted. Once the population equals $P=\{1^n, x\}$, where $f_1(x) = n-1$, we have $\hvc(1^n) = n+1$ and $\hvc(x) = n$, hence NMUAR will always choose $1^n$ as parent, leading to stagnation. 
\end{proof}

\section{Discussion and Conclusions}
\label{sec:con}

Diversity plays a crucial role in the area of EMO. So far, diversity-based parent selection has not been the main focus on algorithm design. We have proposed a range of diversity-based parent selection schemes, aiming to speed up the spread on the Pareto front. We have demonstrated for two example functions, \omm and \lotz, that our new selection schemes can significantly speed up EMO algorithms. Our theoretical results show that a linear factor can be saved for the investigated settings and this is confirmed by our experimental results showing a speedup of one magnitude for problems of size $n=100$.

We have analysed different selection schemes with different preference toward the individual's diversity contribution, from aggressive schemes that put a strong emphasis on individuals with the highest diversity contribution to more relaxed schemes that introduce a bias for more diversity, but still give all individuals a chance to be selected as parents.

The analysis has shown that very extreme schemes can lead to undesired results. For  selection mechanisms that entail a rather extreme change of behaviour, such as Highest Diversity Contribution (HDC) and Non-Minimum Uniformly at Random (NMUAR), search may stagnate. On the other hand, our rank-based approaches as well as tournament selection are successful for \omm and \lotz, for both SEMO and GSEMO. Among these, the power law selection scheme is the fastest, hence we recommend this scheme as having the best trade-off between speed and risk. We believe the power law selection to also be beneficial for other problems as it has a high probability of selecting parents with the highest diversity contribution, but it also has a fat tail, allowing any individual to still be selected as parent with a reasonable probability.

Our theoretical analysis of stagnation behaviour has further revealed an interesting and quite natural setting where standard bit mutations are detrimental in MOEAs, compared to local mutations flipping only one bit. The performance difference is very drastic as the choice of the mutation operator decides between an expected polynomial time and exponential time for finding the whole Pareto front.

For future work, it would be interesting to study the benefit of diversity-based parent selection on more complex problems. From a theoretical perspective, combinatorial optimisation problems such as minimum spanning tress and covering problems for which SEMO has already been studied would be natural candidates. On the experimental side, it would be interesting to integrate the presented diversity-based parent selection methods into state-of-the-art EMO algorithms and to evaluate their performance on well-established benchmark sets.

\section*{Acknowledgements}
The authors would like to thank the anonymous reviewers of the previous GECCO publication for their many valuable suggestions which improved the paper, especially the anonymous reviewer who provided some comments and examples where NMUAR could fail. This research has been supported by the Consejo Nacional de Ciencia y Tecnolog\'{i}a --- CONACYT (the Mexican National Council for Science and Technology) under the grant no. 409151 and registration no. 264342, and by Australian Research Council (ARC) grants DP140103400 and DP160102401. The research leading to these results has received funding from the European Union Seventh Framework Programme (FP7/2007-2013) under grant agreement no.\ 618091 (SAGE).


\bibliographystyle{abbrvnat} 
\bibliography{mybibfile} 

\end{document}